\documentclass[journal]{IEEEtran}

\usepackage[utf8]{inputenc} 
\usepackage[T1]{fontenc}    
\usepackage{hyperref}       
\usepackage{url}            
\usepackage{booktabs}       
\usepackage{amsfonts}       
\usepackage{nicefrac}       
\usepackage{microtype}      
\usepackage{xcolor}         
\usepackage{pifont}
\usepackage{amsmath}
\usepackage{multirow}
\usepackage{adjustbox}
\usepackage[table,xcdraw,dvipsnames]{xcolor}  
\usepackage{tcolorbox}
\usepackage{threeparttable}
\usepackage{hyperref}
\usepackage{bm}  
\usepackage{makecell} 
\usepackage{graphicx}
\usepackage{subfigure}
\usepackage{subcaption}  
\usepackage[numbers,sort&compress]{natbib}
\usepackage{mathtools}
\usepackage{rotating}
\usepackage{wrapfig}
\usepackage{mathrsfs}
\usepackage{amsthm}
\usepackage[ruled,linesnumbered]{algorithm2e}
\definecolor{commentgreen}{RGB}{76,136,107}
\definecolor{commentblue}{RGB}{16,105,181}
\definecolor{lv}{RGB}{204, 255, 204}
\usepackage[ruled,linesnumbered]{algorithm2e}
\usepackage{bm}
\usepackage{marvosym}
\usepackage{enumitem} 
\newtheorem{definition}{Definition}

\newtheorem{proposition}{Proposition}
\setcitestyle{numbers,square}
\hypersetup{
colorlinks=true,
linkcolor=red,
citecolor=magenta,
urlcolor=magenta}

\ifCLASSINFOpdf
\else
\fi
\begin{document}
%
\title{A General ReLearner: Empowering Spatiotemporal Prediction by Re-learning Input-label Residual}
%
%
%

\author{
                Jiaming~Ma,
    		Binwu~Wang$^*$,
    		Pengkun~Wang,
    		Xu~Wang, 
            Zhengyang~Zhou,\\
    		and 
            Yang~Wang$^*$,~\IEEEmembership{Senior Member,~IEEE}

            \thanks{Prof. Yang Wang and Dr. Binwu Wang are the corresponding authors.}
    		\thanks{Jiaming Ma, Binwu Wang, Pengkun Wang, Zhengyang Zhou, Xu Wang, and Yang Wang are with University of Science and Technology of China, Heifei 230022, China (e-mail: JiamingMa@mail.ustc.edu.cn, wbw2024@ustc.edu.cn;  pengkun@ustc.edu.cn;
    			zzy0929@ustc.edu.cn; zyd2020@mail.ustc.edu.cn; 
    			wx309@ustc.edu.cn;   angyan@ustc.edu.cn).}
            \thanks{This work has been submitted to the IEEE for possible publication. Copyright may be transferred without notice, after which this version may no longer be accessible.}}

\markboth{Journal of \LaTeX\ Class Files,~Vol.~14, No.~8, August~2025}%
{Shell \MakeLowercase{\textit{et al.}}: Bare Demo of IEEEtran.cls for IEEE Journals}
%



\maketitle

\begin{abstract}
Prevailing spatiotemporal prediction models typically operate under a forward (unidirectional) learning paradigm, in which models extract spatiotemporal features from historical observation input and map them to target spatiotemporal space for future forecasting (label). However, these models frequently exhibit suboptimal performance when spatiotemporal discrepancies exist between inputs and labels, for instance, when nodes with similar time-series inputs manifest distinct future labels, or vice versa. To address this limitation, we propose explicitly incorporating label features during the training phase. Specifically, we introduce the Spatiotemporal Residual Theorem, which generalizes the conventional unidirectional spatiotemporal prediction paradigm into a bidirectional learning framework. Building upon this theoretical foundation, we design an universal module, termed ReLearner, which seamlessly augments Spatiotemporal Neural Networks (STNNs) with a bidirectional learning capability via an auxiliary inverse learning process. In this process, the model relearns the spatiotemporal feature residuals between input data and future data. The proposed ReLearner comprises two critical components: (1) a Residual Learning Module, designed to effectively disentangle spatiotemporal feature discrepancies between input and label representations; and (2) a Residual Smoothing Module, employed to smooth residual terms and facilitate stable convergence. Extensive experiments conducted on 11 real-world datasets across 14 backbone models demonstrate that ReLearner significantly enhances the predictive performance of existing STNNs. Our code is available on \href{https://anonymous.4open.science/r/nips2025427}{GitHub}.
\end{abstract}

\begin{IEEEkeywords}
Time series forecasting, spatiotemporal data mining, spatiotemporal pattern inference.
\end{IEEEkeywords}
%
\IEEEpeerreviewmaketitle

\section{Introduction}

\IEEEPARstart{S}{patiotemporal} prediction plays a vital role in a wide range of domains, such as intelligent transportation systems~\cite{zheng2014urban, ma2025less, zheng2013u,wu2019graph, yu2017spatio}, environmental management~\cite{ma2025causal, ma2025spatiotemporal, wang2020pm2}, public health~\cite{ma2025mofo}, and energy demand management~\cite{ma2025mobimixer}. In these contexts, data are typically modeled as spatiotemporal graphs, with nodes representing sensors, monitoring stations, or geographic regions and edges reflecting the weighted spatial relationships among them. The core objective of spatiotemporal prediction is to estimate the subsequent future values (label) by analyzing the underlying spatiotemporal patterns in the input historical observations of all nodes~\cite{shao2022decoupled, jin2023survey}.

\begin{figure*}[!t] 
	\centering
	\includegraphics[width=0.9\textwidth]{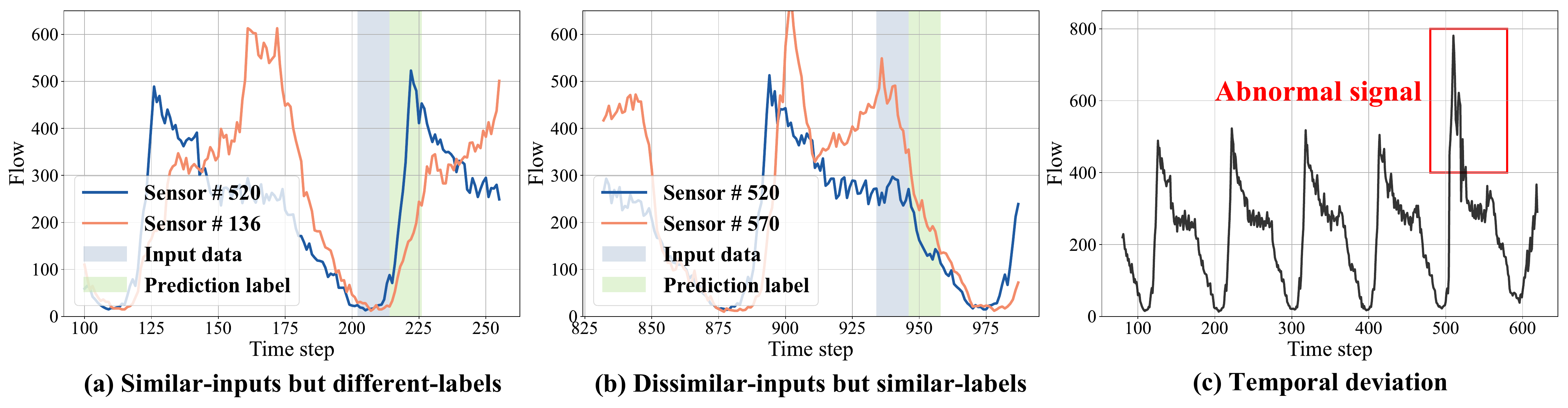}
	\caption{Visualization examples of input-label deviation on LargeST-SD dataset: (a, left) similar histories produce divergent lables; (b, middle) different inputs yield similar lables; and (c, right) abrupt, atypical temporal shifts.}
	\label{fig0}
\end{figure*}

The classic paradigm for achieving accurate spatiotemporal prediction follows a forward process: it leverages meticulously designed spatiotemporal learning methods to capture the spatiotemporal dependencies within the input historical data, and subsequently maps this to the label space to generate predictions. Among these methods, Spatiotemporal Neural Networks (STNNs) have emerged as the most prevalent architecture, effectively combining spatial modeling with sequence-aware temporal learning. The temporal module captures evolutionary patterns in time-series data and commonly employs multilayer perceptrons (MLPs)~\cite{shao2022spatial}, temporal convolutional networks (TCNs)~\cite{yu2017spatio, wu2019graph, ma2025mobimixer}, recurrent neural networks (RNNs)~\cite{bai2020adaptive, wang2020pm2}, or Transformers~\cite{liu2023spatio}. Simultaneously, the spatial module characterizes interactions among nodes, often implemented using graph convolutional networks (GCNs)~\cite{DBLP:journals/pvldb/ShaoZWWXCJ22, ma2025less} or spatial Transformer architectures~\cite{ma2025robust, ma2025causal}. 

Despite the steady progress made by STNNs, most architectures implicitly assume that the spatiotemporal dependencies within the historical input are consistent with those in their future labels, a condition that rarely holds true in complex dynamic systems. Fig.\ref{fig0} shows the resulting spatiotemporal input–label deviation on the LargeST-SD dataset~\cite{liu2024largest}, where predicted labels no longer reflect the trajectories implied by earlier inputs. This deviation typically manifests in two ways:

\textbf{\ding{182}  Spatial deviation} occurs in two complementary situations:
(1) \textit{Similar-inputs but different-labels} (Fig.\ref{fig0} (a)), where two spatial nodes share nearly identical historical observation yet evolve into distinctly different labels; and (2) \textit{Dissimilar-inputs but similar-labels} (Fig.\ref{fig0} (b)), where nodes with markedly different histories eventually converge toward similar future label behaviors. 

\textbf{\ding{183} Temporal deviation} (Fig.\ref{fig0} (c)) arises when a single node undergoes an abrupt behavioral shift that departs from its previous temporal regularities. Each case highlights a distinct limitation of existing STNNs. 

In spatial deviation case (a), the model cannot differentiate future trajectories across nodes that share similar histories, so it fails to capture heterogeneous dynamics; in case (b), it ignores that dissimilar inputs can still converge toward similar outcomes, which encourages redundant learning and weak generalization. Temporal deviation case (c) arises when abrupt shifts break the learned dependencies within a node’s own timeline, causing unstable and inaccurate forecasts.

A potential source of confusion lies in the commonly used terms ``drift” or ``shift” in spatiotemporal out-of-distribution (OOD) scenarios~\cite{wang2024stone, ma2025robust, xia2023deciphering,zhou2023maintaining}, which typically refer to broad distribution discrepancies between the training and test datasets—such as changes in joint statistical properties or spatiotemporal topology. However, this notion is fundamentally different from the form of spatiotemporal input-label deviation in our work. We instead address a more fine-grained deviation between inputs and their future labels—a phenomenon that can manifest not only in OOD scenarios but also within standard independent and identically distributed (IID) settings. To address this unique challenge, existing research has primarily adopted two compensatory strategies. Some studies \cite{gao2023spatial, qi2025lhmformer} have sought to capture richer contextual dynamics by extending the input sequences, a technique aimed at mitigating the temporal aspect of the input-label deviation. Concurrently, other methodologies \cite{shao2022spatial, ma2025less} employ node embeddings to encode the individualized spatial identity of each node, thereby enhancing node differentiability and alleviating the spatial deviation component. Nevertheless, a common limitation among these approaches is their primary emphasis on enhancing input feature extraction, while the direct incorporation of explicit label information remains restricted. This constraint ultimately hinders their effectiveness in fully addressing the input–label deviation problem.

In this paper, we develop a spatiotemporal residual theoretical framework based on the Gaussian-Markov model to support a novel Bidirectional Spatiotemporal Learning Paradigm. The paradigm posits that effective spatiotemporal learning—especially when we explicitly consider label features—must encompass two fundamentally complementary processes: (1) a forward process, which captures the spatiotemporal features of the input data and generates basic predictions, consistent with the conventional STNNs pipeline; and (2) a backward process, which aims to relearn the spatiotemporal deviation information arising from the mismatch between the input and the label, thereby refining and correcting the predictions produced by the forward process.

Based on this theoretical foundation, we designed ReLearner, which can be seamlessly integrated into various STNNs to enhance their backward spatiotemporal learning process. Specifically, ReLearner compares the high-dimensional representations of the input and label to explicit decouple spatiotemporal residual, which is the high-dimensional representation of input-label deviation. These decoupled residuals are then smoothed by a propagation kernel to suppress abnormal signals. Subsequently, ReLearner aligns and decodes the correction terms, which are applied to the basic predictions generated by the forward process to improve accuracy. Throughout this process, we utilize high-dimensional representations of the labels; these information-dense representations enable more comprehensive modeling while simultaneously avoiding direct access to the ground-truth labels. ReLearner effectively models the input-label deviation during training and improves prediction performance during inference. 

The main contributions are summarized as follows:

\begin{itemize}[label=\ding{110}]
    \item \textbf{Novel learning paradigm}. We introduce a spatiotemporal residual theory that establishes a bidirectional learning framework, where the incorporation of label representations into the residual learning process enhances prediction accuracy and model robustness.

    \item \textbf{Universal integration module}. We propose ReLearner, a lightweight yet powerful module that can be readily integrated with diverse STNN architectures to explicitly model spatiotemporal input–label deviation through a reverse learning mechanism.

    \item \textbf{Comprehensive empirical validation}. Extensive experiments on \textbf{11} real-world datasets and \textbf{14} backbone models demonstrate that ReLearner can significantly enhance the prediction performance of existing STNNs, with improvements of up to \textbf{21.18\%}.
\end{itemize}

After this section, we review related work in Section~\ref{gen_inst}. Section~\ref{sec:pre} formally defines the spatiotemporal learning problem. Section~\ref{headings} presents two theoretical formulations grounded in the spatiotemporal Markov field assumption and introduces the proposed ReLearner framework. Section~\ref{sec:exp} provides extensive experiments and visualization analyzes, demonstrating the validity of ReLearner. Section \ref{dis} and Section \ref{con} present future work and conclusions.

\section{Related Work}\label{gen_inst}

\subsection{Spatiotemporal Prediction}

Recently, STNNs are the most representative approaches for spatiotemporal prediction tasks \cite{wang2023pattern,xia2023deciphering,wu2025spatio}, which typically includes a spatial module that captures spatial dependencies and a sequential module that captures temporal dependencies respectively. For example, SSTBAN \cite{guo2023self} follows a multi-task framework by incorporating a self-supervised learner to produce robust latent representations for historical traffic data. STID \cite{shao2022spatial} identified spatiotemporal residual phenomena and proposed utilizing node embeddings to alleviate spatiotemporal residual. However, it can not effectively model label features, thus failing to thoroughly capture spatiotemporal deviation, especially regarding temporal deviation. AGCRN \cite{bai2020adaptive} introduced adaptive graph convolutional recurrent networks that learn node-specific spatial correlations and temporal dynamics through adaptive graph learning and node-wise recurrent units. MTGNN \cite{wu2020connecting} proposed a multi-scale temporal graph neural network that captures both short-term and long-term dependencies in spatiotemporal data using dilated convolutions and graph convolutions.  Recently, researchers have integrated Transformers to take advantage of their powerful long-range modeling capabilities. For example, STAEformer \cite{liu2023spatio} employs Transformers in both spatial and temporal domains. D$^2$STGNN \cite{shao2022decoupled} utilizes Transformer to decouple the diffusion and intrinsic signals in traffic data, thereby improving prediction accuracy.

There are some studies explore non-model approaches such as ST-LoRA \cite{ruan2024low} to improve existing models with node-adaptive low-rank layers, the reported results show limited enhancements.  Additionally, Adaptive Graph Sparsification (AGS) \cite{duan2023localised} and Graph Winning Ticket (GWT) \cite{duan2024pre} algorithms focus on optimizing adjacency matrices in prediction models for improved operational efficiency of Adaptive Spatiotemporal Graph Neural Networks like AGCRN.

\subsection{Out-of-Distribution in Spatiotemporal Learning}
In OOD learning, there are concepts that are easily confused with the spatiotemporal input-label deviation we focus on, such as distribution drift and spatiotemporal shift. We summarize these research advances and highlight their differences. 

Recent studies on distribution shift are primarily from either the spatial or the temporal dimension. From the spatial perspective, several studies have examined spatial drift, where the topology or correlation structure among nodes evolves over time. Representative examples include STONE \cite{wang2024stone} and STOP \cite{ma2025robust}, which dynamically model changing spatial dependencies by masking operations and customized optimization strategies to enhance robustness under varying graph structures. From the temporal perspective, prior works on temporal shift in time series forecasting have focused on non-stationarity in sequential patterns. Typical approaches employ normalization or adaptation mechanisms\cite{ogasawara2010adaptive, passalis2019deep}, such as RevIN~\cite{kim2021reversible}, and Dish-TS~\cite{fan2023dish}, which aim to stabilize temporal statistics or adjust for global distribution drift. While they are effective for capturing spatial topology evolution or temporal discrepancies, respectively, these methods are primarily concerned with dataset-level distribution shifts, and do not explicitly address spatiotemporal input-label deviation that may arise between input features and target labels within the same dataset.

\section{Preliminaries}\label{sec:pre}

\textbf{Spatiotemporal data.} We use a graph $\mathcal{G}=\left(\mathscr{V}, \mathcal{E}, \mathbf{A}\right)$ to represent spatiotemporal data, where $\mathscr{V}$ means the node set with $N$ nodes, $\mathcal{E}$ means the set of edges, and $\mathbf{A} \in \mathbb{R}^{ N \times N}$ is the weighted adjacency matrix of the graph $\mathcal{G}$. We use $x_t \in \mathbb{R}^{N \times f}$ to represent the observed spatiotemporal graph data of $N$ nodes at time step $t$, where $f$ indicates the number of feature channels. 

\textbf{Spatiotemporal prediction.} Given the graph $\mathcal{G}$ and the historical data of the past $T$ time steps $\mathbf{x}=\left\{x_1, ..., x_T\right\}\in\mathbb{R}^{T\times N \times f}$ as inputs, this task aims to learn a function $\mathcal{F}$ that can effectively predict the values (i.e., labels) $\mathbf{y}=\left\{x_{T+1},...,x_{T+T_P}\right\}\in\mathbb{R}^{T_P\times N\times f}$ in further $T_P$ time steps.

\begin{figure*}[t]
    \centering
    \includegraphics[width=0.95\textwidth]{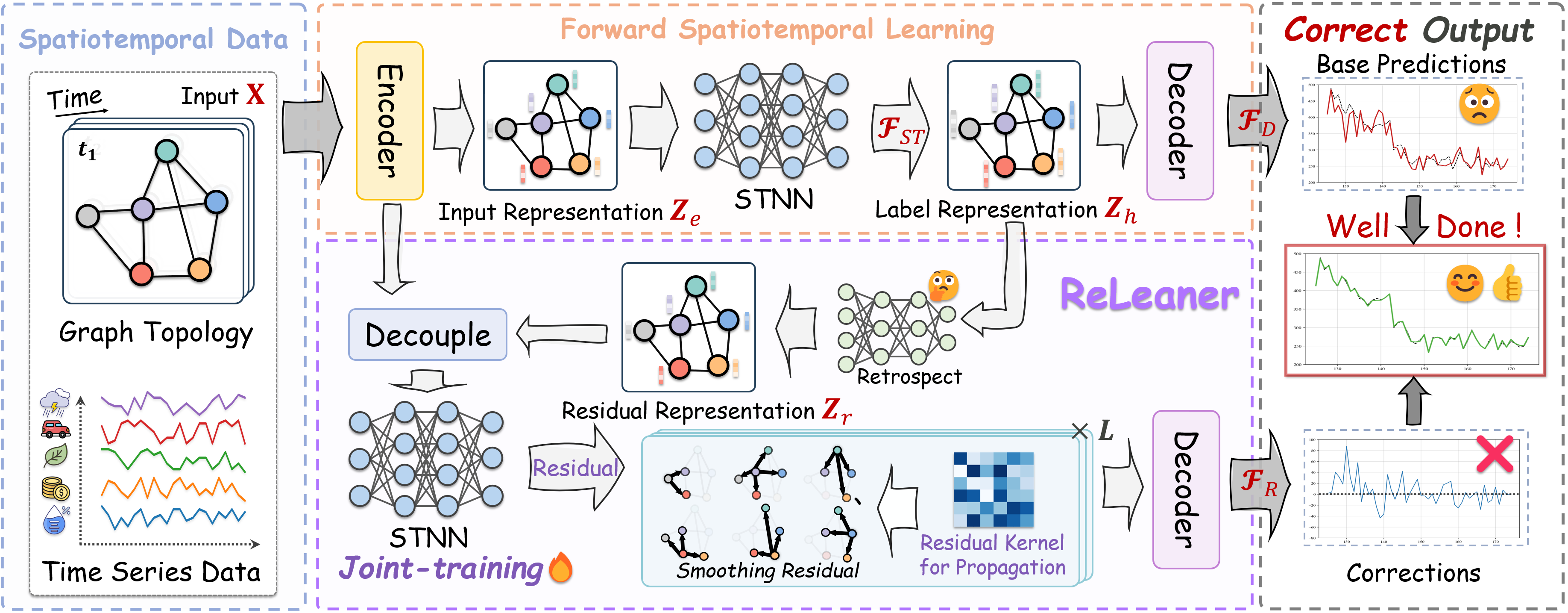} 
    \caption{The overall framework of ReLearner for spatiotemporal learning. ReLearner extends the forward learning process of STNNs by supplementing it with a backward learning process to explicitly model residual information.}
    \label{overmodel} 
\end{figure*}

\section{Method}\label{headings}
We first present the proposed spatiotemporal residual theory, followed by a detailed explanation of the design of ReLearner based on this theory, as illustrated in Fig.\ref{overmodel} and Algorithm~\ref{algorithm1}. Table~\ref{vari_mean} summarizes some important definitions.

\begin{table}[!h]
\setlength{\arrayrulewidth}{1.0pt}
  \renewcommand{\arraystretch}{1.2}
  \centering
  \caption{Some important variables and their definitions.}
  \resizebox{\linewidth}{!}{
    \begin{tabular}{c|l}
    \hline
    \rowcolor[rgb]{.867,.875,  .91}\multicolumn{1}{c|}{\textbf{Variable}} & \multicolumn{1}{c}{\textbf{Definition}} \\
    \hline
    \multicolumn{1}{c|}{$\mathbf{x}$/$\bm{x}$} & \multicolumn{1}{l}{Data/ its  corresponding variable in GMRF} \\
    \rowcolor[rgb]{ .949,  .949,  .949}\multicolumn{1}{c|}{$\mathbf{y}$/$\bm{y}$} & \multicolumn{1}{l}{Label/ its  corresponding variable in GMRF} \\
    \multicolumn{1}{c|}{$W$/$\theta$} & \multicolumn{1}{l}{Parameters of GMRF} \\
    \rowcolor[rgb]{ .949,  .949,  .949}\multicolumn{1}{c|}{$\mathbf{Z}_{e}$} & \multicolumn{1}{l}{Input representation} \\
    \multicolumn{1}{c|}{$\mathbf{Z}_{h}$} & \multicolumn{1}{l}{Label representation} \\
    \rowcolor[rgb]{ .949,  .949,  .949}$\mathbf{Z}_{res}$/$\tilde{\mathbf{Z}}_{res}$ & Residual representation/Smoothed residual representation \\
    $y_{base}$/$y_{corr}$ & Base predition/Prediction correction \\
    \rowcolor[rgb]{ .949,  .949,  .949}$\mathcal{F}_E$ & Input encoder \\
    $\mathcal{F}_{ST}$ & Any STNN \\
    \rowcolor[rgb]{ .949,  .949,  .949}$\mathcal{F}_D$ & Base encoder \\
    $\mathcal{F}_{R}$ & Residual decoder \\
    \rowcolor[rgb]{ .949,  .949,  .949}$T$   & The length of input time step \\
    $N$   & The number of nodes \\
    \rowcolor[rgb]{ .949,  .949,  .949}$T_p$ & The length of label time step \\
    $f$   & The number of features of spatiotemporal data \\
    \rowcolor[rgb]{ .949,  .949,  .949}$K$   & The number of kernels in residual propagation kernel \\
    $L$   & The number of residual propagation layers \\
    \hline
    \end{tabular}}%
  \label{vari_mean}%
\end{table}%

\subsection{Spatiotemporal Residual Theory based on Gaussian Markov Random Field}
Gaussian Markov Random Field (GMRF) is a powerful framework capable of modeling complex dependencies among random variables in a structured and interpretable manner, and it has been widely applied in spatiotemporal dynamics analysis \cite{zheng2016traffic,furtlehner2021short}. Inspired by these pioneering studies, we adopt GMRF to analyze the latent spatiotemporal dependencies within the data. In this framework, each spatiotemporal data point is associated with a corresponding random variable, and a transition matrix is used to represent the dependencies among these variables. \textbf{In the following sections, we use roman (upright) font to denote spatiotemporal data points, and \textit{italic font} to denote the corresponding random variables in the GMRF.} For example, $\mathbf{x}$ denotes a spatiotemporal data point, while $\bm{x}$ represents its associated random variable in the GMRF. Before formally introducing the Gaussian Markov Random Field, we first provide the definition of its underlying Gaussian distribution.

\noindent \textbf{Definition 1. Multivariate Gaussian Distribution.} Consider the multivariate Gaussian distribution \cite{goodman1963statistical} on spatiotemporal variables $\bm{T}\sim\mathcal{N}(\bar{\mathbf{T}},\mathbf{\Sigma})$, including historical and future temporal variables, where $
\bar{\mathbf{T}}$ is the expectation and $\mathbf{\Sigma}$ is the covariance matrix. The probability density of $\bm{T}$ is
\begin{align}
f_{\bm{T}}\left(\mathbf{T}|\bar{\mathbf{T}},\mathbf{\Sigma}\right)=\frac{\det\left(\mathbf{\Sigma}\right)^{\frac12}\exp\left({-\frac12\tilde{\mathcal{V}}(\mathbf{T})^\top\mathbf{\Sigma}^{-1}\tilde{\mathcal{V}}(\mathbf{T})}\right)}{\left(\sqrt{2\pi}\right)^{{\left(T+T_P\right)N}}},
\end{align}
where $\tilde{\mathcal{V}}(\mathbf{T})=\mathcal{V}\left(\mathbf{T}\right)-\mathcal{V}(\bar{\mathbf{T}})$. We use the bold subscript symbols $\bm{x}$ and $\bm{y}$ to denote the subtensor of the tensor corresponding to the rows and columns of the $\bm{x}$ and $\bm{y}$ parts of $\bm{T}$. Split the variable into historical and future temporal parts; then the spatiotemporal variables have the distribution in block form.
\begin{equation}
\left[\bm{x},\bm{y}\right]\sim\mathcal{N}\left(\left[\mathbf{\bar{x}},\mathbf{\bar{y}}\right],\begin{bmatrix}\mathbf{\Sigma}_{\bm{x}\bm{x}}&\mathbf{\Sigma}_{\bm{x}\bm{y}}\\\mathbf{\Sigma}_{\bm{y}\bm{x}}&\mathbf{\Sigma}_{\bm{y}\bm{y}}\end{bmatrix}\right).
\end{equation}
In many cases, the covariance matrix is dense while the precision matrix (or inverse covariance matrix) $\mathbf{\Gamma}=\mathbf{\Sigma}^{-1}$ is sparse \cite{fan2016overview}, hence it is oftentimes economical to work with $\mathbf{\Gamma}$. We rewrite the block form distribution of spatiotemporal variables with precision matrix.
\begin{equation}
\left[\bm{x},\bm{y}\right]\sim\mathcal{N}\left(\left[\mathbf{\bar{x}},\mathbf{\bar{y}}\right],\begin{bmatrix}\mathbf{\Gamma}_{\bm{x}\bm{x}}&\mathbf{\Gamma}_{\bm{x}\bm{y}}\\\mathbf{\Gamma}_{\bm{y}\bm{x}}&\mathbf{\Gamma}_{\bm{y}\bm{y}}\end{bmatrix}^{-1}\right).
\end{equation}

After establishing the definition of the multivariate Gaussian distribution, we proceed to derive its marginal and conditional forms, which serve as key mathematical foundations for the subsequent Forward spatiotemporal learning (Proposition~\ref{theory1}) and spatiotemporal residual theorems (Proposition~\ref{theory2}). For clarity and completeness, we first present the formal definitions of these two distributions below.

\noindent\ding{115} \textbf{Marginal distribution.} The marginal distribution of future temporal variable $\bm{y}$ is simply self-relevant from the mean and covariance
\begin{equation}
\bm{y}\sim\mathcal{N}\left(\mathbf{\bar{y}},\mathbf{\Sigma}_{\bm{y}\bm{y}}\right).
\end{equation}
By the property of the inverse of block matrix \cite{choi2009new}, we have $\mathbf{\Sigma}_{\bm{y}\bm{y}}=\left(\mathbf{\Gamma}_{\bm{y}\bm{y}}-\mathbf{\Gamma}_{\bm{y}\bm{x}}\mathbf{\Gamma}_{\bm{x}\bm{x}}^{-1}\mathbf{\Gamma}_{\bm{x}\bm{y}}\right)^{-1}$, hence we rewrite the marginal distribution of future temporal variable in the form of precision matrix,
\begin{equation}
\bm{y}\sim\mathcal{N}\left(\mathbf{\bar{y}},\left(\mathbf{\Gamma}_{\bm{y}\bm{y}}-\mathbf{\Gamma}_{\bm{y}\bm{x}}\mathbf{\Gamma}_{\bm{x}\bm{x}}^{-1}\mathbf{\Gamma}_{\bm{x}\bm{y}}\right)^{-1}\right).
\end{equation}

\noindent\ding{115} \textbf{Conditional distribution.}\label{conditionaldistribution}~The conditional distribution of future temporal variable $\bm{y}$ with respect to history temporal variable $\bm{x}=\mathbf{x}$ is also a multivariate Gaussian distribution
\begin{equation}
\begin{aligned}
\bm{y}|\bm{x}=\mathbf{x}\sim\mathcal{N}\left(\mathbf{\bar{y}}+\mathbf{\Sigma}_{\bm{y}\bm{x}}\mathbf{\Sigma}_{\bm{x}\bm{x}}^{-1}\left(\mathbf{x}-\bar{\mathbf{x}}\right),\mathbf{\Sigma}_{\bm{y}\bm{y}}-\mathbf{\Sigma}_{\bm{y}\bm{x}}\mathbf{\Sigma}_{\bm{x}\bm{x}}^{-1}\mathbf{\Sigma}_{\bm{x}\bm{y}}\right).
\end{aligned}
\end{equation}
Moreover, one can show that $\mathbf{\Sigma}_{\bm{yx}}\mathbf{\Sigma}_{\bm{xx}}^{-1}=-\mathbf{\Gamma}_{\bm{yy}}^{-1}\mathbf{\Gamma}_{\bm{yx}}$ and $\left(\mathbf{\Sigma}_{\bm{y}\bm{y}}-\mathbf{\Sigma}_{\bm{y}\bm{x}}\mathbf{\Sigma}_{\bm{x}\bm{x}}^{-1}\mathbf{\Sigma}_{\bm{x}\bm{y}}\right)=\mathbf{\Gamma}_{\bm{y}\bm{y}}^{-1}$ by the block matrix inversion, then the conditional distribution can be written as
\begin{equation}
\bm{y}|\bm{x}=\mathbf{x}\sim\mathcal{N}\left(\mathbf{\bar{y}}-\mathbf{\Gamma}_{\bm{yy}}^{-1}\mathbf{\Gamma}_{\bm{yx}}\left(\mathbf{x}-\bar{\mathbf{x}}\right),\mathbf{\Gamma}_{\bm{y}\bm{y}}^{-1}\right).
\end{equation}

Subsequently, we can derive the GMRF properties of the spatiotemporal input $\mathbf{x}$ and label $\mathbf{y}$ by the multivariate Gaussian distribution assumption. First, we stack 
the input data $\mathbf{x}$ and the label $\mathbf{y}$ along the temporal dimension into a tensor $\mathbf{T}\coloneqq\left[\mathbf{x},\mathbf{y}\right]\in\mathbb{R}^{\left(T+T_P\right)\times N\times f}$. We use $\mathbf{T}_{t,:,:}\in\mathbb{R}^{N\times f}$ to denote spatiotemporal data of all nodes at $t$-th time step. We also use $\mathbf{T}_{:,u,:}\in\mathbb{R}^{\left(T+T_P\right)\times f}$ to represent the spatiotemporal data of $u$-th node during all time steps. As mentioned above, the random variable of $\mathbf{T}$ in our GMRF is denoted as $\bm{T}$. In a GMRF, all values in the matrix $\mathbf{T}$ are jointly sampled from a distribution over the random variable $\bm{T}$. The definition of the joint distribution of $\bm{T}$ in the GMRF is given below.
\begin{tcolorbox}
[colback=Salmon!20, colframe=Salmon!90!Black, title=\begin{definition}\label{def1}The Joint Distribution of $\bm{T}$ in the GMRF\end{definition}]

The joint distribution of $\bm{T}$ in the GMRF is decided by a probability density function\cite{baz2022some,rue2005gaussian}.
\begin{align}
    f_{\bm{T}}\left(\bm{T}=\mathbf{T}\mid W, \theta\right)=\frac{e^{-\Phi\left(\mathbf{T}\mid W, \theta\right)}}{\int d\mathbf{T}^{\prime}e^{-\Phi\left(\mathbf{T}^{\prime}\mid W, \theta\right)}},
\end{align}
where $W\in\mathbb{R}^{\left(T+T_P\right)\times\left(T+T_P\right)}$ and $\theta\in\mathbb{R}^{\left(T+T_P\right)}$ are the parameters of GMRF. 
\end{tcolorbox}

\noindent Here $W$ should be symmetric positive definite, describing the conditional dependency structure between variables, and $\theta$ is entry-wise positive, which defines the expected value of each variable. The exponent power function $\Phi$ is defined as
\begin{equation}
\begin{aligned}
&\Phi\left(\mathbf{T}\mid W,\theta\right)\\
\coloneqq&\frac{1}{2}\sum_{u\in V}\mathbf{T}_{:,u,:}^\top W\mathbf{T}_{:,u,:}+\frac{1}{2}\sum_{t=1}^{T+T_P}\theta_t\mathbf{T}_{t,:,:}^\top\mathcal{A}\left(\mathbf{A}\right)\mathbf{T}_{t,:,:}\\
=&\frac{1}{2}\mathcal{V}\left(\mathbf{T}\right)^\top\mathbf{\Gamma}\mathcal{V}\left(\mathbf{T}\right),
\end{aligned}
\end{equation}
where the potential matrix $\mathbf{\Gamma}\in\mathbb{R}^{\left[\left(T+T_P\right)N\right]\times\left[\left(T+T_P\right)N\right]}$ reflects the dependence of variables of GRMF in temporal and spatial dimensions, which can be computed as
\begin{equation}
\mathbf{\Gamma}\coloneqq\left(W\otimes\mathbf{I}_N\right)+\operatorname{diag}\left(\theta\right)\otimes\mathcal{A}\left(\mathbf{A}\right).
\end{equation}
Here $\mathcal{A}\left(\mathbf{A}\right)=\mathbf{I}_N-\mathcal{N}\left(\mathbf{A}\right)$ is a graph Laplace-like operator, and $\mathbf{I_N}$ is the identity matrix of the adjacency matrix $\mathbf{A}$, $\mathcal{N}\left(\cdot\right)$ is a normalization operator such as the normalized graph Laplacian $\mathcal{A}\left(\mathbf{A}\right)=\mathbf{I_N}-\mathbf{D}^{-1/2}\mathbf{A}\mathbf{D}^{-1/2}$ with degree matrix $\mathbf{D}=\operatorname{diag}\left(\sum_{u\in V}\mathbf{A}_{1,u},...,\sum_{u\in V}\mathbf{A}_{N,u}\right)$ and diagonalization operator $\operatorname{diag}\left(\cdot\right)$. $\mathcal{V}\left(\cdot\right)$ is a vectorization operator to unfold the first two dimensions of the input, i.e., $\mathcal{V}\left(\mathbf{T}\right)=\left(\mathbf{T}_{1,1,:},...,\mathbf{T}_{T+T_P,N,:}\right)^\top\in\mathbb{R}^{\left[\left(T+T_P\right)N\right]\times f}$, and $\otimes$ is the Kronecker product.

Based on the constructed GMRF, the prediction process of existing spatiotemporal learning models can be interpreted as learning the conditional distribution of future variables $\mathbf{y}$ with respect to historical input data $\mathbf{x}$, i.e., modeling $\mathbf{y},|,\mathbf{x}=\mathbf{x}$. Within the GMRF framework, this conditional distribution admits a closed-form representation that naturally decomposes into a series of spatiotemporal operations. The following proposition provides the corresponding formulation in detail.

\begin{tcolorbox}[colback=Emerald!10,colframe=cyan!40!black,title=\begin{proposition}\label{theory1}Forward spatiotemporal learning\end{proposition}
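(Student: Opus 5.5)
The plan is to read the GMRF of Definition~\ref{def1} as the multivariate Gaussian of Definition~1, and then apply the precision-matrix form of the conditional distribution. The Kronecker structure of $\mathbf{\Gamma}$ will then turn the conditional mean into a composition of a temporal operator and a spatial operator.

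\textbf{Step 1: identify the Gaussian.} The exponent satisfies $\Phi(\mathbf{T}\mid W,\theta)=\frac12\mathcal{V}(\mathbf{T})^\top\mathbf{\Gamma}\mathcal{V}(\mathbf{T})$ and has no linear term. So $f_{\bm{T}}$ is, column by column in the feature channel, the density of $\mathcal{N}(\mathbf{0},\mathbf{\Gamma}^{-1})$, with normaliser $(2\pi)^{(T+T_P)N/2}\det(\mathbf{\Gamma})^{-1/2}$. This needs $\mathbf{\Gamma}$ to be symmetric positive definite, which I will check as follows:
\begin{itemize}
\item $W\otimes\mathbf{I}_N$ is SPD because $W$ is SPD.
\item $\operatorname{diag}(\theta)\otimes\mathcal{A}(\mathbf{A})$ is positive semidefinite, since $\theta>0$ entrywise and the normalised Laplacian has spectrum in $[0,2]$.
\end{itemize}
Any centring by $\bar{\mathbf{x}},\bar{\mathbf{y}}$ can be absorbed by a shift, so below I write $\mathbf{x}$ for $\mathbf{x}-\bar{\mathbf{x}}$.

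\textbf{Step 2: partition and compute the blocks.} Split the time index into history $\{1,\dots,T\}$ and future $\{T+1,\dots,T+T_P\}$, so that
\[
W=\begin{bmatrix}W_{\bm{xx}}&W_{\bm{xy}}\\ W_{\bm{yx}}&W_{\bm{yy}}\end{bmatrix},\qquad \theta=[\theta_{\bm{x}},\theta_{\bm{y}}].
\]
Vectorisation is time-major, so the Kronecker product respects this partition. Because $\operatorname{diag}(\theta)$ is diagonal in time, its off-diagonal time blocks vanish. This gives
\[
\mathbf{\Gamma}_{\bm{yy}}=W_{\bm{yy}}\otimes\mathbf{I}_N+\operatorname{diag}(\theta_{\bm{y}})\otimes\mathcal{A}(\mathbf{A}),\qquad
\mathbf{\Gamma}_{\bm{yx}}=W_{\bm{yx}}\otimes\mathbf{I}_N.
\]

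\textbf{Step 3: substitute into the conditional law.} Putting these blocks into the precision-matrix form of the conditional distribution yields
\[
\bm{y}\mid\bm{x}=\mathbf{x}\sim\mathcal{N}\!\left(-\big(W_{\bm{yy}}\otimes\mathbf{I}_N+\operatorname{diag}(\theta_{\bm{y}})\otimes\mathcal{A}(\mathbf{A})\big)^{-1}(W_{\bm{yx}}\otimes\mathbf{I}_N)\,\mathcal{V}(\mathbf{x}),\ \mathbf{\Gamma}_{\bm{yy}}^{-1}\right).
\]
I then interpret the mean as the forward STNN pipeline:
\begin{itemize}
\item $(W_{\bm{yx}}\otimes\mathbf{I}_N)\mathcal{V}(\mathbf{x})$ is a node-shared temporal map from $T$ input steps to $T_P$ output steps. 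It plays the role of the temporal module (an MLP/TCN-type projection).
\item The inverse of $\mathbf{\Gamma}_{\bm{yy}}$ acts as a coupled spatial--temporal smoothing operator. It plays the role of the spatial module.
\end{itemize}

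\textbf{Step 4: the main obstacle.} The hardest part is making the interpretation of $\mathbf{\Gamma}_{\bm{yy}}^{-1}$ as a stack of graph convolutions rigorous. I would first try a Neumann series. Write $\mathcal{A}(\mathbf{A})=\mathbf{I}_N-\tilde{\mathbf{A}}$ with $\tilde{\mathbf{A}}=\mathbf{D}^{-1/2}\mathbf{A}\mathbf{D}^{-1/2}$, and set $M=(W_{\bm{yy}}+\operatorname{diag}(\theta_{\bm{y}}))\otimes\mathbf{I}_N$. Then
\[
\mathbf{\Gamma}_{\bm{yy}}=M-\operatorname{diag}(\theta_{\bm{y}})\otimes\tilde{\mathbf{A}},
\]
and, writing $B=M^{-1}(\operatorname{diag}(\theta_{\bm{y}})\otimes\tilde{\mathbf{A}})$,
\[
\mathbf{\Gamma}_{\bm{yy}}^{-1}=\sum_{k\ge0}B^{k}M^{-1}.
\]
This is a polynomial in $\tilde{\mathbf{A}}$ interleaved with temporal mixings, i.e.\ alternating GCN and temporal layers. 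Convergence needs $\rho(B)<1$. I would obtain this from positive definiteness: since $M-\operatorname{diag}(\theta_{\bm{y}})\otimes\tilde{\mathbf{A}}$ is PD and $\|\tilde{\mathbf{A}}\|\le1$, a similarity argument with $M^{1/2}$ should bound the spectral radius below $1$. If that bound fails in general, I would state the convergence as an assumption, or truncate the series and control the remainder. Everything else is routine block algebra.
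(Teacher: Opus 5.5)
There is a genuine gap in Step 4. Your Steps 1--3 coincide with the paper's proof: you read the GMRF as $\mathcal{N}(\mathbf{0},\mathbf{\Gamma}^{-1})$, compute $\mathbf{\Gamma}_{\bm{yy}}=W_{\bm{yy}}\otimes\mathbf{I}_N+\operatorname{diag}(\theta_{\bm{y}})\otimes\mathcal{A}(\mathbf{A})$ and $\mathbf{\Gamma}_{\bm{yx}}=W_{\bm{yx}}\otimes\mathbf{I}_N$, and write $\mathbb{E}[\bm{y}\mid\mathbf{x}]=-\mathbf{\Gamma}_{\bm{yy}}^{-1}\mathbf{\Gamma}_{\bm{yx}}\mathcal{V}(\mathbf{x})$.

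The proposition is not the qualitative claim that the mean is ``a temporal map followed by graph smoothing''. It asserts an explicit per-step formula, $\mathbb{E}[\bm{y}_t\mid\mathbf{x}]=(1-\gamma_t)\sum_{k\ge0}(\gamma_t\mathcal{N}(\mathbf{A}))^k\times_2(\bm{\beta}_t\mathbf{x})^\top$, where the scalar $\gamma_t$ and the vector $\bm{\beta}_t$ are built only from row $t'=T+t$ of $W$. Your coupled series $\sum_k B^kM^{-1}$ never produces this. The remaining step is also not routine block algebra. For a general SPD $W$, the $t$-th block of $\mathbf{\Gamma}_{\bm{yy}}^{-1}\mathbf{\Gamma}_{\bm{yx}}\mathcal{V}(\mathbf{x})$ mixes the rows $W_{s',1:T}$ of all future steps through the off-diagonal entries of $W_{\bm{yy}}$, and the stated formula is then false. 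For instance, in the degenerate case $\mathcal{N}(\mathbf{A})=\mathbf{0}$:
\begin{itemize}
\item the true mean is $-[(W_{\bm{yy}}+\operatorname{diag}(\theta_{\bm{y}}))^{-1}W_{\bm{yx}}]_{t,:}\,\mathbf{x}$;
\item the proposition gives $-W_{t',1:T}\mathbf{x}/(W_{t',t'}+\theta_{t'})$.
\end{itemize}
These differ in general once $W_{\bm{yy}}$ has off-diagonal entries.

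The missing ingredient is decoupling the future time steps. The paper makes this move tacitly when it writes the $t$-th block of the mean as $-(W_{t',t'}\mathbf{I}_N+\theta_{t'}\mathcal{A}(\mathbf{A}))^{-1}(W_{t',1:T}\otimes\mathbf{I}_N)\mathcal{V}(\mathbf{x})$. That is, it inverts only the $(t,t)$ diagonal block of $\mathbf{\Gamma}_{\bm{yy}}$, which is exact when $W_{\bm{yy}}$ is diagonal but not in general. Later, just before the Spatiotemporal Residual Theory, the paper says it disregards dependence across different time steps. You should adopt this explicitly as an assumption.

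With that assumption your own expansion closes in one line. $M$ and $B$ become block diagonal in time. The $t$-th block of $B$ is $\gamma_t\mathcal{N}(\mathbf{A})$ with $\gamma_t=\theta_{t'}/(W_{t',t'}+\theta_{t'})$. The $t$-th block of $M^{-1}$ is $(W_{t',t'}+\theta_{t'})^{-1}\mathbf{I}_N=(1-\gamma_t)W_{t',t'}^{-1}\mathbf{I}_N$. Applying $-W_{t',1:T}\otimes\mathbf{I}_N$ then gives exactly the claimed formula with $\bm{\beta}_t=-W_{t',1:T}/W_{t',t'}$. The paper reaches the same point differently: it factors out $W_{t',t'}$, sets $\alpha_t=\theta_{t'}/W_{t',t'}$, and writes $(\mathbf{I}_N+\alpha_t\mathcal{A}(\mathbf{A}))^{-1}=(1+\alpha_t)^{-1}(\mathbf{I}_N-\gamma_t\mathcal{N}(\mathbf{A}))^{-1}$.

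The convergence question you flag as the main obstacle is then immediate. Since $W_{t',t'}>0$ and $\theta_{t'}>0$, we have $\gamma_t\in(0,1)$, and $\rho(\mathcal{N}(\mathbf{A}))\le1$, so $\rho(\gamma_t\mathcal{N}(\mathbf{A}))<1$. It is $\gamma_t<1$ that does the work, not $\mathcal{N}(\mathbf{A})^k\to\mathbf{0}$, which fails on a connected graph. Your coupled series also converges: $M\pm\operatorname{diag}(\theta_{\bm{y}})\otimes\mathcal{N}(\mathbf{A})$ are both positive definite, so $M^{-1/2}(\operatorname{diag}(\theta_{\bm{y}})\otimes\mathcal{N}(\mathbf{A}))M^{-1/2}$ has spectrum in $(-1,1)$. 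But that only gives a more general representation of the mean, not the one the proposition states.
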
]
For any further time step $t= \{1, 2, \cdots , T_P\}$, the expectation of the variable $\bm{y}_t \in\mathbb{R}^{1\times N\times f}$ representing labels all nodes at $t$-th future time step with respect to the input data $\mathbf{x}$ can be computed as:
\begin{equation}
\begin{aligned}\label{formula2}
\hspace{-0.5em}\mathbb{E}\left[\bm{y}_t|\mathbf{x}\right]&=\left(1-\gamma_t\right)\sum_{k=0}^{\infty}\left(\gamma_t\mathcal{N}\left(\mathbf{A}\right)\right)^k\mathbf{x}^\top\times_2\bm{\beta}_t^\top,\\
&=\left(1-\gamma_t\right)\sum_{k=0}^{\infty}\left(\gamma_t\mathcal{N}\left(\mathbf{A}\right)\right)^k\times_2\left(\bm{\beta}_t\mathbf{x}\right)^\top,
\end{aligned}
\end{equation}
where $\gamma_t=\frac{\theta_t^\prime}{W_{t^\prime,t^\prime}+\theta_t^\prime}\in\mathbb{R}$ is a scaling scalar and $\bm{\beta}_{t}=-\frac{W_{t^\prime,1:T}}{W_{t^\prime,t^\prime}}\in\mathbb{R}^{1\times T}$ is a coefficient vector.
\end{tcolorbox}
\noindent Here $W_{t^\prime,t^\prime}$ means the value of $t^\prime$-th row and $t^\prime$-th column of $W$ with $t^\prime=T+t$ and $W_{t^\prime,1:T}$ means the first $T$ column and $t^\prime$-th row of $W$. $\times_2$ implies performing tensor multiplication operations in the second dimension.

\noindent\textit{Proof of Proposition \ref{theory1}}.  By the definition~\ref{def1} of GMRF, we can define the multivariate Gaussian distribution of probability density function,
\begin{align}
f_{\bm{T}} \left(\mathbf{T}\right)=\frac{\det\left(\mathbf{\Gamma}^{-1}\right)^{\frac12}\exp\left({-\frac12\mathcal{V}\left(\mathbf{T}\right)^\top\mathbf{\Gamma}\mathcal{V}\left(\mathbf{T}\right)}\right)}{\left(2\pi\right)^{\frac{N\left(T+T_P\right)}{2}}},
\end{align}
where
\begin{equation}
\mathbf{\Gamma}=\begin{bmatrix}\mathbf{\Gamma}_{\bm{x}\bm{x}}&\mathbf{\Gamma}_{\bm{x}\bm{y}}\\\mathbf{\Gamma}_{\bm{y}\bm{x}}&\mathbf{\Gamma}_{\bm{y}\bm{y}}\end{bmatrix}=\mathbf{\Sigma}^{-1},
\end{equation}
is the precision matrix, i.e., the inverse of covariance matrix $\mathbf{\Sigma}$. The temporal tensor are jointly sampled via multivariate Gaussian distribution $\mathcal{V}\left(\bm{T}\right)\sim\mathcal{N}(\mathbf{0},\mathbf{\Gamma}^{-1})$. Here, $W$ satisfying symmetric positive definite and $\theta$ satisfying entry-wise positive are the pseudo parameters of standard GMRF model. Hence we have $\bm{y}|\mathbf{x}\sim\mathcal{N}\left(-\mathbf{\Gamma}_{\bm{yy}}^{-1}\mathbf{\Gamma}_{\bm{yx}}\mathcal{V}\left(\mathbf{x}\right),\mathbf{\Gamma}_{\bm{y}\bm{y}}^{-1}\right)$, i.e., 
\begin{align}
    &\mathbb{E}\left[\bm{y}|\mathbf{x}\right]=-\mathbf{\Gamma}_{\bm{yy}}^{-1}\mathbf{\Gamma}_{\bm{yx}}\mathcal{V}\left(\mathbf{x}\right)\\\notag
    &=-\left(W_{\bm{yy}}\otimes\mathbf{I}_N+\operatorname{diag}\left(\theta_{\bm{y}}\right)\otimes\mathcal{A}\left(\mathbf{A}\right)\right)^{-1}\left(W_{\bm{yx}}\otimes\mathbf{I}_N\right))\mathcal{V}\left(\mathbf{x}\right).
\end{align}
Hence for arbitrary $t\in{1,2,...,T_P}$, we have 
\begin{align}
&\mathbb{E}\left[\bm{y}_t|\mathbf{x}\right]\\&=-\left(W_{T+t,T+t}\mathbf{I}_N+\theta_{T+t}\mathcal{A}\left(\mathbf{A}\right)\right)^{-1}\left(W_{T+t,1:T}\otimes\mathbf{I}_N\right)\mathcal{V}\left(\mathbf{x}\right),\notag\\&=-\left(W_{T+t,T+t}\mathbf{I}_N+\theta_{T+t}\mathcal{A}\left(\mathbf{A}\right)\right)^{-1}\mathbf{x}^\top\times_2W_{T+t,1:T}^\top,\notag\\
&=\left(W_{T+t,T+t}\mathbf{I}_N+\theta_{T+t}\mathcal{A}\left(\mathbf{A}\right)\right)^{-1}\times_2\left(-W_{T+t,1:T}\mathbf{x}\right)^\top.\notag
\end{align}
where $\times_i$ is the matrix multiplication of tensor on the $i$-th dimension. Let $\alpha_t=\frac{\theta{t}}{W_{T+t,T+t}}$ and $\bm{\beta}_t=-\frac{W_{T+t,1:T}}{W_{T+t,T+t}}$, we obtain reduce the above equation like
\begin{equation}
\begin{aligned}
\mathbb{E}\left[\bm{y}_t|\mathbf{x}\right]&=\left(\mathbf{I}_N+\alpha_t\mathcal{A}\left(\mathbf{A}\right)\right)^{-1}\mathbf{x}^\top\times_2\bm{\beta}_t^\top,\\
&=\left(\mathbf{I}_N+\alpha_t\mathcal{A}\left(\mathbf{A}\right)\right)^{-1}\times_2\left(\bm{\beta}_t\mathbf{x}\right)^\top.
\end{aligned}
\end{equation}
Moreover, since $\operatorname{lim}_{k\rightarrow\infty}{\mathcal{N}\left(A\right)}^{k}=\mathbf{0}$, we expand the term $\left(\mathbf{I}_N+\alpha_t\mathcal{A}\left(\mathbf{A}\right)\right)^{-1}$ in terms of the Neumann series \cite{moulinec2018convergence} as
\begin{equation}
\begin{aligned}
\left(\mathbf{I}_N+\alpha_t\mathcal{A}\left(\mathbf{A}\right)\right)^{-1}&=\left(\left(1+\alpha_t\right)\mathbf{I}_N-\alpha_t\mathcal{N}\left(\mathbf{A}\right)\right)^{-1},\\
&=\left(1-\gamma_t\right)\sum_{k=0}^{\infty}\left(\gamma_t\mathcal{N}\left(\mathbf{A}\right)\right)^k.
\end{aligned}
\end{equation}
where $\gamma_t=\alpha_t/\left(1+\alpha_t\right)$. Hence for $\forall t\in{1,2,...,T_P}$, we get 
\begin{equation}
\begin{aligned}
\mathbb{E}\left[\bm{y}_t|\mathbf{x}\right]&=\left(1-\gamma_t\right)\sum_{k=0}^{\infty}\left(\gamma_t\mathcal{N}\left(\mathbf{A}\right)\right)^k\mathbf{x}^\top\times_2\bm{\beta}_t^\top,\\
&=\left(1-\gamma_t\right)\sum_{k=0}^{\infty}\left(\gamma_t\mathcal{N}\left(\mathbf{A}\right)\right)^k\times_2\left(\bm{\beta}_t\mathbf{x}\right)^\top,
\end{aligned}
\end{equation}
which completes the proof. $\hfill\qedsymbol$

Equation~\ref{formula2} outlines a general learning paradigm employed by existing STNNs. In this paradigm, the STNN extracts features from both spatial and temporal dimensions of the input data $\mathbf{x}$ to produce predictions. The spatial operator $\sum_{k=0}^{\infty}\left(\gamma_t\mathcal{N}\left(\mathbf{A}\right)\right)^k$ corresponds to the graph convolution operation, utilizing GCN \cite{shao2022pre} or Transformer \cite{jiang2023pdformer} as operators. Conversely, the temporal operator $\bm{\beta}_t$ captures the correlation across multiple time steps. This component typically functions as a sequence modeling model, such as TCN \cite{bai2018empirical}, RNN \cite{cheng2024spatial}, or Transformer \cite{wang2013dynamic}.

The aforementioned traditional paradigms focus only on the spatiotemporal features of the input data. To tackle this issue, we are interested in exploring the integration of label features into the learning process. Let $\mathbf{y}_{t, u}\in \mathbb{R}^{f}$ represent the label of node $u$ at time step $t$, and denote the labels of the other nodes (excluding node $u$) as $\mathbf{y}_{t, \hat{u}}\coloneqq{\left[\mathbf{y}_{t, 1}^\top,...,\mathbf{y}_{t, u-1}^\top,\mathbf{y}_{t, u+1}^\top...,\mathbf{y}_{N}^\top\right]}^\top\in\mathbb{R}^{\left(N-1\right)\times f}$. There exist correlations between $\bm{y}_{t, u}$ and the labels of the other nodes, considering only the spatial correlation at each time step and disregarding the dependence across different time steps. Our objective is to incorporate the spatiotemporal features into the GMRF framework. The condition for the GMRF is to predict the variable $\bm{y}_{t,u}$ with the goal of minimizing the difference from the label $\mathbf{y}_{t, u}$. 
\begin{tcolorbox}[colback=Emerald!10,colframe=cyan!40!black,title=\begin{proposition}\label{theory2}Spatiotemporal Residual Theory\end{proposition}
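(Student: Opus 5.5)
The plan is to work entirely inside the GMRF of Definition~\ref{def1} and reuse the conditional-Gaussian calculus already used in the proof of Proposition~\ref{theory1}. The target identity should decompose $\mathbb{E}\left[\bm{y}_{t,u}\mid\mathbf{x},\mathbf{y}_{t,\hat{u}}\right]$ into two parts: the forward prediction of Proposition~\ref{theory1}, plus a spatially propagated correction built from the label residuals $\mathbf{y}_{t,v}-\mathbb{E}\left[\bm{y}_{t,v}\mid\mathbf{x}\right]$ of the other nodes. I will first condition on $\mathbf{x}$ only, and then condition the resulting Gaussian on the remaining labels at the same step.

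\textbf{Step 1: the law of $\bm{y}_t$ given $\mathbf{x}$.} As the statement instructs, I keep only the spatial coupling at step $t$ and drop cross-step dependence among future variables. Then the proof of Proposition~\ref{theory1} gives
\begin{equation*}
\bm{y}_t\mid\mathbf{x}\sim\mathcal{N}\left(\bm{\mu}_t,\mathbf{\Gamma}_t^{-1}\right),
\end{equation*}
with precision
\begin{equation*}
\mathbf{\Gamma}_t=W_{t^\prime,t^\prime}\mathbf{I}_N+\theta_{t^\prime}\mathcal{A}\left(\mathbf{A}\right)=\left(W_{t^\prime,t^\prime}+\theta_{t^\prime}\right)\mathbf{I}_N-\theta_{t^\prime}\mathcal{N}\left(\mathbf{A}\right).
\end{equation*}
Here $\bm{\mu}_t=\mathbb{E}\left[\bm{y}_t\mid\mathbf{x}\right]$ is exactly the expression in Equation~\ref{formula2}.

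\textbf{Step 2: condition on the other nodes' labels.} I apply the precision-form conditional formula from the ``Conditional distribution'' paragraph. This time the block split is node $u$ versus $\hat{u}$ inside $\bm{y}_t$, rather than $\bm{x}$ versus $\bm{y}$. It yields
\begin{equation*}
\mathbb{E}\left[\bm{y}_{t,u}\mid\mathbf{x},\mathbf{y}_{t,\hat{u}}\right]=\bm{\mu}_{t,u}-\left(\mathbf{\Gamma}_t\right)_{uu}^{-1}\left(\mathbf{\Gamma}_t\right)_{u\hat{u}}\left(\mathbf{y}_{t,\hat{u}}-\bm{\mu}_{t,\hat{u}}\right).
\end{equation*}
Assuming $\mathcal{N}\left(\mathbf{A}\right)$ has zero diagonal (no self-loops), the diagonal entry is $\left(\mathbf{\Gamma}_t\right)_{uu}=W_{t^\prime,t^\prime}+\theta_{t^\prime}$, and the off-diagonal entries are $\left(\mathbf{\Gamma}_t\right)_{uv}=-\theta_{t^\prime}\mathcal{N}\left(\mathbf{A}\right)_{uv}$. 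Substituting these gives
\begin{equation*}
\mathbb{E}\left[\bm{y}_{t,u}\mid\mathbf{x},\mathbf{y}_{t,\hat{u}}\right]=\bm{\mu}_{t,u}+\gamma_t\sum_{v\neq u}\mathcal{N}\left(\mathbf{A}\right)_{uv}\left(\mathbf{y}_{t,v}-\bm{\mu}_{t,v}\right),
\end{equation*}
with the same $\gamma_t$ as in Proposition~\ref{theory1}. This is the ``base prediction plus propagated residual'' form.

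\textbf{Step 3: match the statement's form.} If the statement is phrased as a propagation series or in representation space, I would unroll the residual term with the same Neumann-series argument used in Proposition~\ref{theory1}. That argument is justified because $\gamma_t<1$ and $\mathcal{N}\left(\mathbf{A}\right)^k\to\mathbf{0}$. I would then read $\mathbf{y}-\bm{\mu}$ as the residual $\mathbf{Z}_{res}$ and the powers of $\gamma_t\mathcal{N}\left(\mathbf{A}\right)$ as the smoothing kernel.

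\textbf{Main obstacle.} The hard part is the bookkeeping of which dependencies the statement keeps. In particular, the cross-step blocks of $W_{\bm{yy}}$ must be justifiably discarded so that $\mathbf{\Gamma}_t$ is a per-step matrix. The zero-diagonal convention for $\mathcal{N}\left(\mathbf{A}\right)$ must also hold; if it fails, the factor $\gamma_t$ gets rescaled by $1-\gamma_t\mathcal{N}\left(\mathbf{A}\right)_{uu}$. I would state both as explicit assumptions up front.
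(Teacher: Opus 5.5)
Your proposal is correct and follows essentially the paper's route: with cross-step coupling among future steps dropped, $\bm{y}_t\mid\mathbf{x}$ is Gaussian with precision $W_{t^\prime,t^\prime}\mathbf{I}_N+\theta_{t^\prime}\mathcal{A}(\mathbf{A})$, and precision-form Gaussian conditioning on a node split gives base prediction plus correction (the paper does this for a general partition $V=V_1\cup V_2$ and then sets $V_1=\{u\}$). Your term $\gamma_t\sum_{v\neq u}\mathcal{N}(\mathbf{A})_{uv}(\mathbf{y}_{t,v}-\bm{\mu}_{t,v})$ is the paper's correction rewritten: the paper pairs the residual $\mathbf{r}_{t,\hat{u}}=\bm{\mu}_{t,\hat{u}}-\mathbf{y}_{t,\hat{u}}$ with the row $(\mathbf{I}_N+\alpha_t\mathcal{A}(\mathbf{A}))_{u,\hat{u}}=-\alpha_t\mathcal{N}(\mathbf{A})_{u,\hat{u}}$, scaled by $(1-\gamma_t)\sum_{k}(\gamma_t\mathcal{N}(\mathbf{A})_{u,u})^k=(1+\alpha_t\mathcal{A}(\mathbf{A})_{u,u})^{-1}$, which reduces to yours when $\mathcal{N}(\mathbf{A})_{u,u}=0$ and needs no such assumption otherwise; because that Neumann series acts only on the $1\times 1$ block of node $u$, your Step 3 (multi-hop powers of $\gamma_t\mathcal{N}(\mathbf{A})$) is neither needed nor produced by this conditional.
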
]
For any future time step $t=\{1,2,\cdots,T_P\}$, the expectation of $\bm{y}_{t,u}$ with respect to $\mathbf{x}$ and $\mathbf{y}_{t,\hat{u}}$ is
\begin{align}\label{eq_43}
\mathbb{E}&\left[\bm{y}_{t,u}|\mathbf{x},\mathbf{y}_{t,\hat{u}}\right]
\\
=&\underbrace{\mathbb{E}\left[\bm{y}_{t, u}|\mathbf{x}\right]}_{\text{Base prediction}}+\underbrace{\underbrace{\beta_{t,u}\left(\mathbf{I}_N+\alpha_t\mathcal{A}\left(\mathbf{A}\right)\right)_{u,\hat{u}}}_{\text{Propagation Kernel}}\times_2\underbrace{\mathbf{r}_{t,\hat{u}}}_{\text{Residual}}}_{\text{Correction}}.\notag
\end{align}
\end{tcolorbox}
\noindent\textit{Proof of Proposition \ref{theory2}}. Without loss of generality, we simplify the subsequent calculations by assuming two nodes disjoint union partition $V=V_1\cup V_2$, i.e., $V_1\cap V_2=\emptyset$ to explore what are the implications for spatiotemporal learning when adding future impacts between data. Recall the result of Theory 1 and above proof, we get 
$\bm{y}|\mathbf{x}\sim\mathcal{N}\left(\mathbb{E}\left[\bm{y}|\mathbf{x}\right],\mathbf{\Gamma}_{\bm{y}\bm{y}}^{-1}\right).$
Hence the conditional distribution of $\bm{y}_{t,V_1}$ respect to $\mathbf{y}_{t,V_2}$ and $\mathbf{x}$ for the disjoint union $V_1\cup V_2$ of node set $V$ and arbitrary $t=1,2,...,T_P$ is $\bm{y}_{t,V_1}|\mathbf{x},\mathbf{y}_{t,V_2}\sim\mathcal{N}\left(\mu^*,\mathbf{\Gamma}_{t,V_1V_1}^{-1}\right)$, The expectation term $\mu^*$ satisfies,
\begin{align*}
\mu^*=\mathbb{E}\left[\bm{y}_{t,V_1}|\mathbf{x}\right]&+\mathbf{\Gamma}_{t,V_1V_1}^{-1}\mathbf{\Gamma}_{t,V_1V_2}\times_2\left(\mathbb{E}\left[\bm{y}_{t,V_1}|\mathbf{x}\right]-\mathbf{y}_{t,V_2}\right),
\end{align*}
where $\mathbf{y}_{t,V_i}\coloneqq{\left[\mathbf{y}_{t,u,:}^\top\mid\forall u\in V_i\right]}^\top$ for $i=1,2$. Hence the above expectation is,
\begin{equation}
\begin{aligned}
&\mathbb{E}\left[\bm{y}_{t,V_1}|\mathbf{x},\mathbf{y}_{t,V_2}\right]\\
=~&\mathbb{E}\left[\bm{y}_{t,V_1}|\mathbf{x}\right]+\mathbf{\Gamma}_{t,V_1V_1}^{-1}\mathbf{\Gamma}_{t,V_1V_2}\left(\mathbb{E}\left[\bm{y}_{t,V_1}|\mathbf{x}\right]-\mathbf{y}_{t,V_2}\right),
\end{aligned}
\end{equation}
and the right term is calculated as follows,
\begin{equation}
    \begin{aligned}
&\mathbf{\Gamma}_{t,V_1V_1}^{-1}\mathbf{\Gamma}_{t,V_1V_2}\left(\mathbb{E}\left[\bm{y}_{t,V_1}|\mathbf{x}\right]-\mathbf{y}_{t,V_2}\right),\\
=~&\left(W_{T+t,T+t}\mathbf{I}_N+\theta_{T+t}\mathcal{A}\left(\mathbf{A}\right)\right)_{V_1V_1}^{-1}\\
&\left(W_{T+t,T+t}\mathbf{I}_N+\theta_{T+t}\mathcal{A}\left(\mathbf{A}\right)\right)_{V_1V_2}\times_2\mathbf{r}_{t,V_2},\\
=~&\left(\mathbf{I}_N+\alpha_{t}\mathcal{A}\left(\mathbf{A}\right)\right)_{V_1V_1}^{-1}\left(\mathbf{I}_N+\alpha_{t}\mathcal{A}\left(\mathbf{A}\right)\right)_{V_1V_2}\times_2\mathbf{r}_{t,V_2},\\
=~&\left(1-\gamma_t\right)\sum_{k=0}^{\infty}\left(\gamma_t\mathcal{N}\left(\mathbf{A}\right)_{V_1,V_1}\right)^k\\
&\left(\mathbf{I}_N+\alpha_t\mathcal{A}\left(\mathbf{A}\right)\right)_{V_1,V_2}\times_2\mathbf{r}_{t,V_2},  
    \end{aligned}
\end{equation}
This term is still from the expansion of Neumann series \cite{moulinec2018convergence} where $\alpha_t=\frac{\theta_{t}}{W_{T+t,T+t}}$ and $\gamma_t=\alpha_t/\left(1+\alpha_t\right)$. The term $\left(\mathbf{I}_N+\alpha_t\mathcal{A}\left(\mathbf{A}\right)\right)_{V_1,V_2}$ indicates the submatrix consisting of rows corresponding to entries in $V_1$ and columns corresponding to entries in $V_2$ for $\mathbf{I}_N+\alpha_t\mathcal{A}\left(\mathbf{A}\right)$, similar to $\mathcal{N}\left(\mathbf{A}\right)_{V_1,V_1}$, which illustrates the dynamics of residual propagation in this context. It must be noted, however, that the results of the closed form are independent of the node disjoint union partition chosen, as determined by the equivariance of the GMRF \cite{baz2022some}. Hence, the case we considered in Spatiotemporal Residual Theory is just a special example in the proof when $V_1=\left\{u\right\}$ and $V_2=V\backslash\left\{u\right\}$.$\hfill\qedsymbol$

The base prediction is generated by forward spatiotemporal learning corresponding to Proposition \ref{theory1}. The smoothing coefficient is used to smooth the residual term for smooth learning. And  $\tau_{t,u}=\left[\left(1+\alpha_t\right)\left(1+\alpha_t\mathcal{A}\left(\mathbf{A}\right)_{u,u}\right)\right]^{-1}$ is a scalar with $\alpha_t=\frac{\theta_{t^\prime}}{W_{t^\prime,t^\prime}}$ and $t^\prime=T+t$, where $\mathcal{A}\left(\mathbf{A}\right)_{u,u}$ indicates the entry on $u$-th row and $u$-th column of $\mathcal{A}\left(\mathbf{A}\right)$, and $\left(\mathbf{I}_N+\alpha_t\mathcal{A}\left(\mathbf{A}\right)\right)_{u,\hat{u}} \in\mathbb{R}^{1\times \left(N-1\right)}$ is the $u$-th row of $\mathbf{I}_N+\alpha_t\mathcal{A}\left(\mathbf{A}\right)$ excluding itself. The smoothing coefficient signifies the affinity between node $u$ and the remaining nodes. In fact, it can be regarded as a part from the graph kernel. To differentiate, we call this graph kernel as residual propagation kernel. The residual term $\mathbf{r}_{t,\hat{u}}$ represents the difference between predicted expectations and labels, which is denoted as follows:
\begin{align}\label{res_term}
\mathbf{r}_{t,\hat{u}}\coloneqq\mathbb{E}\left[\bm{y}_{t, \hat{u}}|\mathbf{x}\right]-\mathbb{E}\left[\mathbf{y}_{t,\hat{u}}\right] \in\mathbb{R}^{1\times \left(N-1\right)\times f}.
\end{align}

In Equation~\ref{res_term}, the base prediction $\mathbb{E}\left[\bm{y}_{t, \hat{u}}|\mathbf{x}\right]$ arises from modeling the correlations within the input data, while $\mathbb{E}\left[\mathbf{y}_{t,\hat{u}}\right]$ depends on the autocorrelation of the labels.  Therefore, the residual term actually represents the feature difference between the input and the label features.

\noindent\textbf{Remark}. Spatiotemporal Residual Theory unveils that a holistic spatiotemporal learning paradigm integrating label information should encompass both a forward process and a backward process.   The forward process in spatiotemporal learning captures the inter-dependencies in the input data to produce base predictions, whereas the backward process is dedicated to modeling residuals for generating correction terms.   These residuals encapsulate the discrepant features of the input and labels.   The ultimate prediction is derived from the amalgamation of the base prediction and the correction terms. This theory aligns perfectly with our assertion.

\subsection{ReLearner based on Spatiotemporal Residual Theory}
We initially outline the general structure of conventional STNNs utilized for forward spatiotemporal learning.  Subsequently, we delve into a comprehensive description of our innovative ReLearner module and elucidate its seamless integration with STNNs.

\subsubsection{Forward Spatiotemporal Learning}

As shown in Fig.\ref{overmodel}, existing spatiotemporal prediction models typically consist of three parts: (1) An input encoder maps the input data into a high-dimensional feature space, generally combining enhancement strategies such as node embedding technology. This function is denoted as $\mathcal{F}_E: \mathbf{x}\mapsto\mathbf{Z}_{e}\in\mathbb{R}^{T\times N \times d_{e}}$, where $\mathbf{Z}_{e}$ is termed as the \textit{input representation}; (2) A STNN module $\mathcal{F}_{ST}$ is used to capture spatiotemporal features of input representation and generate the \textit{label representation} $\mathbf{Z}_{h}$: $\mathcal{F}_{ST}:\mathbf{Z}_{e}\mapsto\mathbf{Z}_{h}\in\mathbb{R}^{T\times N \times d_{h}}$. The trained label representations $\mathbf{Z}_{h}$ can approximate the high-dimensional feature mapping of the labels; (3) A base decoder $\mathcal{F}_D$ decodes the label representation $\mathbf{Z}_{h}$ to generate base prediction $\mathbf{y}_{base}$, $\mathcal{F}_D:\mathbf{Z}_{h}\mapsto\mathbf{y}_{base}\in\mathbb{R}^{T_P\times N \times f}$. 

\subsubsection{Backward Residual Correction} The proposed backward spatiotemporal deviation learning pipeline is composed of three key components: \textbf{residual learning}, which extracts discrepancies between input feature representation and predictive labels representation; \textbf{residual propagation}, which captures and smooths spatiotemporal dependencies among these residuals; and \textbf{prediction correction}, which leverages the propagated residual information to adaptively correct the model’s forecasts and enhance overall predictive accuracy.

\noindent\ding{182} \textbf{Residual Learning.} In Equation \ref{res_term}, the residual term delineates the feature disparity between the input and the labels. Our methodology involves modeling the input representation $\mathbf{Z}_{e}$ and the label representation $\mathbf{Z}_{h}$ to calculate this residual. The initial label representations are derived through spatiotemporal learning of input data.  Post-training, these representations closely mirror the distribution of the labels, akin to high-dimensional feature mappings of the labels.  Leveraging these representations for residual computation allows us to capture deviation across diverse dimensions.  Furthermore, this strategy alleviates the requirement to access the real labels, particularly in situations where acquiring the true labels is impractical during the inference phase.

Specifically, we employ a MLP layer with Gaussian Error Linear Units (GELU) activation function \cite{hendrycks2016gaussian,devlin2018bert} to map the label representation $\mathbf{Z}_{h}$ to the same space: $\mathbf{Z}_{h}\mapsto \mathbf{Z}_{r}\in\mathbb{R}^{T\times N \times d_{e}}$. Subsequently, we decouple the spatiotemporal deviation features by subtracting the two representations $\mathbf{Z}_{e}-\mathbf{Z}_{r}$, which helps filter out redundant spatiotemporal features. This deviation is then fed into the STNN module $\mathcal{F}_{ST}$, placing particular emphasis on the model's relearning of this information. Consequently, the resulting output $\mathbf{Z}_{res}$ represents the residual term. The overall calculation process can be outlined as follows:
\begin{align}
\mathbf{Z}_{res}&=\mathcal{F}_{ST}\left(\mathbf{Z}_{e}-\text{MLP}\left(\mathbf{Z}_{h}\right)\right).
\end{align}
\noindent\ding{183} \textbf{Residual Propagation.} As show in Spatiotemporal Residual Theory, it is essential to use spatiotemporal correlation to smooth this residual term $\mathbf{Z}_{res}$. The smoothing process is similar to an aggregation process based on a residual propagation kernel, i.e., graph kernel. In this paper, we thoroughly investigate the effectiveness of different types of graph kernels, including predefined kernel, diffusion kernel, adaptive kernel, and data-driven kernel (details in Section~\ref{ap_kernel}). In order to enhance the representation ability of models, we deploy $K$ kernels $\left(\mathcal{K}_1,\mathcal{K}_2,...,\mathcal{K}_K\right)$. Let $\mathcal{K}\coloneqq\bm{\tau}\left(\mathbf{I}_N+\frac1K\sum_{i=1}^K\bm{\alpha_i}\mathcal{K}_i\right)$, where $\bm{\alpha}_i=\operatorname{diag}\left(\alpha_{i,1},\alpha_{i,2},...,\alpha_{i,N}\right)\in\left(-1,1\right)^{N\times N}$ and $\bm{\tau}=\operatorname{diag}\left(\tau_1,\tau_2,...,\tau_N\right)\in\left(0,1\right)^{N\times N}$ are the learnable parameters. The variable $a$ represents the intensity of residual diffusion across the global spatiotemporal graph originating from a particular node, indicated by its magnitude. The sign of $a$ determines whether the correlation between nodes is positive or negative, thereby mitigating the risk of weak regression caused by an overly strong assumption of homogeneity for anomalous data nodes \cite{xu2021anomaly,kim2022graph}. The parameter $\bm{\tau}$ plays a crucial role in determining the overall magnitude of the residual impact.

We denote the residual propagation layer as $\mathcal{F}_{RP}$. Let $L$ denote the number of residual propagation layers. Then, the smoothed residual $\tilde{\mathbf{Z}}_{res}=\mathcal{F}_{RP}(\mathbf{Z}_{res})$ propagated along the spatiotemporal structure can be formulated as
\begin{align}\label{L-equa22}
   \mathcal{F}_{RP}(\mathbf{Z}_{res}) = {\left[\bm{\tau}{\left(\mathbf{I}_N+\frac{1}{K}\sum_{i=1}^K\bm{\alpha}_i\mathcal{K}_i\right)}\times_2\mathbf{Z}_{res}\right]}^L.
\end{align}

\noindent\ding{184} \textbf{Prediction Correction.} We employ two layers of MLP with the GELU activation function as a residual decoder $\mathcal{F}_{R}$ to generate a prediction correction term $\mathbf{y}_{cor}=\mathcal{F}_{R}\left(\tilde{\mathbf{Z}}_{res}\right)$. Finally, this correction is added to the base prediction $\textbf{y}_{base}$ to yield final prediction $\mathbf{\hat{y}}$: 
\begin{equation}\label{prediction_y}
\mathbf{\hat{y}}=\mathbf{y}_{base}+\mathbf{y}_{cor}. 
\end{equation}

\subsection{Residual Propagation Kernel}\label{ap_kernel}
In this section, we will introduce the construction of different kernels $\mathcal{K}_i$. In traditional spatiotemporal graph learning, there are four widely used approaches to generate the potential relationships between nodes (i.e., adjacency kernel function): predefined kernel, diffusion kernel, adaptive kernel, and data-driven kernel. Each kernel $\mathcal{K}_i$ is a different normalized graph by the normalization operation $\mathcal{N}(\cdot)$. 

\noindent\ding{182} \textbf{Predefined Kernel.} This kernel is typically constructed based on various prior information, such as the geographical information of nodes. This kernel function remains static during the model learning process. Specifically, for traffic data, we calculate the geographical distance between nodes $\mathbf{d}_{s}\in\mathbb{R}^{N\times N}$ \cite{li2017diffusion,wu2019graph,yu2017spatio}, then we construct the adjacency matrix kernel in the following manner:
\begin{align}
&\mathbf{A}_{s}\coloneqq e^{-\frac{\mathbf{d}_{s}}{\sigma^2}}\odot\mathbb{I}_{\left\{\mathbf{d}_{s}<-\sigma^2\ln{\varepsilon}|\varepsilon\in\left(0,1\right)\right\}},\\
&\mathcal{K}_i=\mathcal{N}(\mathbf{A}_{s})=\mathbf{A}_{s}\mathbf{D}_{s}^{-1},
\end{align}
with degree matrix $\mathbf{D}_s$ and diagonalization operator $\operatorname{diag}$. $\mathbb{I}$ is indicator function\footnote{The indicator function can also be replaced by some transformation of Heaviside step function \cite{weisstein2002heaviside}.}and hyperparameter $\varepsilon\in\left(0,1\right)$ filters through an extremely weak correlation to ease the burden of training. $\sigma$ is the standard deviation of $\mathbf{d}_{s}$. $\odot$ is Hadamard Product. And for atmosphere data, we calculate the geographic adjacency matrix based on longitude-latitude geodesic distance matrix $\mathbf{d}_{geo}\in\mathbb{R}^{N\times N}$ \cite{wang2014multisource}~and relative altitude matrix $\mathbf{h}_{alt}\in\mathbb{R}^{N\times N}$ \cite{wang2020pm2}~if existing as follows, 
\begin{align}
&\mathbf{A}_{geo}\coloneqq\mathbb{I}_{\left\{\mathbf{d}_{geo}<\varepsilon|\varepsilon>0\right\}}\odot\mathbb{I}_{\left\{\mathbf{h}_{alt}<\xi|\xi>0\right\}},\\
&\mathcal{K}_i=\mathcal{N}(\mathbf{A}_{geo})=\mathbf{D}_{geo}^{-1/2}\mathbf{A}_{geo}\mathbf{D}_{geo}^{-1/2},
\end{align}
where the altitude satisfies, 
\begin{align}
    \mathbf{h}_{alt}\left[u,v\right]\coloneqq \sup_{\lambda\in\left(0,1\right)}\left\{h_{\lambda u+\left(1-\lambda\right)v}\right\}-\max\left\{h_u,h_v\right\}.
\end{align}

\noindent\ding{183} \textbf{Diffusion Kernel.} The diffusion kernel represents a diffusion process where information is assumed to transfer from one node to its neighboring nodes with certain transition probabilities. This concept has a strong analogy in spatiotemporal graph domains, such as the traffic flow between nodes, which can be viewed as a diffusion process. Specifically, it is generally obtained in the following ways: 
\begin{align}
&\mathbf{A}_{double}\coloneqq \left[\mathbf{A}_{road},\mathbf{A}_{road}^\top\right],\\
&\mathcal{K}_i=\mathcal{N}\left(\mathbf{A}_{double}\right)=\mathbf{A}_{double}\mathbf{D}_{double}^{-1}.
\end{align}

\noindent\ding{184} \textbf{Adaptive Kernel.} The adaptive kernel is generated with two learnable node embeddings that can capture more complex node features from the data \cite{wu2019graph,shao2022decoupled,bai2020adaptive}, which can be computed as:
\begin{align}    &\mathbf{A}_{adp}\coloneqq\operatorname{ReLU}\left(\mathbf{E}_1\mathbf{E}_2^\top\right),\\
\mathcal{N}(&\mathbf{A}_{adp})=\operatorname{Softmax}\left(\mathbf{A}_{adp}-\operatorname{diag}\left(\mathbf{A}_{adp}\right)\right),
\end{align}
where $\mathbf{E}_1, \mathbf{E}_2\in\mathbb{R}^{N\times d_{adp}}$ are two learning node embeddings.

\noindent\ding{185} \textbf{Data-driven Kernel.} This kernel is generated by a complex neural network, notably using the Transformer architecture \cite{shao2022decoupled,jiang2023pdformer}. Following these inspirations, we use Transformer to compute this kind of kernel: 
\begin{align}
\mathbf{A}_{att}\coloneqq\frac{\operatorname{MLP}_1\left(\mathbf{Z}_{res}^{\left(t\right)}\right)\operatorname{MLP}_2\left(\mathbf{Z}_{res}^{\left(t\right)}\right)^\top}{\sqrt{d_{hid}}},\\
\mathcal{N}\left(\mathbf{A}_{att}\right)=\operatorname{Softmax}\left(\mathbf{A}_{att}-\operatorname{diag}\left(\mathbf{A}_{att}\right)\right),
\end{align}
where $\mathbf{Z}_{res}^{\left(t\right)}$ is the residual representation of $t$-th time step.

\begin{algorithm}
\caption{ReLearner for spatiotemporal prediction}
\label{algorithm1}
\KwIn{Input data $\mathbf{x}\in\mathbb{R}^{T\times N\times f}$~\tcp*{\textcolor{red}{No label required}}}
\KwOut{Future label $\hat{\mathbf{y}}\in\mathbb{R}^{T_P\times N\times f}$}

$\mathbf{Z}_{e}\leftarrow\mathcal{F}_E\left(\mathbf{x}\right)$\tcp*{Input representation}

\textbf{\# Forward spatiotemporal learning}\;

$\mathbf{Z}_{h}\leftarrow \mathcal{F}_{ST}\left(\mathbf{Z}_{e}\right)$\tcp*{Label representation learning}
$\mathbf{y}_{base}\leftarrow \mathcal{F}_{ST}\left(\mathbf{Z}_{h}\right) $\tcp*{\textcolor{commentblue}{Base prediction}}
\textbf{\# Backward residual correction}\;
$\mathbf{Z}_{res}\leftarrow\mathcal{F}_{ST}\left(\mathbf{Z}_{e}-\text{MLP}\left(\mathbf{Z}_{h}\right)\right)$\tcp*{Residual learning}

$   \tilde{\mathbf{Z}}_{res} = {\left[\bm{\tau}{\left(\mathbf{I}_N+\frac{1}{K}\sum_{i=1}^K\bm{\alpha}_i\mathcal{K}_i\right)}\times_2\mathbf{Z}_{res}\right]}^L$\tcp*{Residual propagation}

$\mathbf{y}_{corr}\leftarrow\mathcal{F}_{R}\left(\tilde{\mathbf{Z}}_{res}\right)$\tcp*{{Correction prediction}}
\textbf{\# Final prediction}\;
$\hat{\mathbf{y}}\leftarrow\mathbf{y}_{base}+\mathbf{y}_{corr}$\tcp*{{Final prediction}}
\end{algorithm}

\section{Experiment}\label{sec:exp}
In this section, we evaluate the effectiveness and generality of the proposed generic ReLeaner modules across different real-world datasets and over multiple advanced baselines and formulate seven research questions (RQ) as follows:

\begin{itemize}[label=\ding{110}]
\item \textbf{RQ1:} Does the integration of ReLearner consistently improve the predictive performance of existing STNNs across different tasks and datasets?
\item \textbf{RQ2:} Can ReLearner effectively mitigate spatiotemporal input–label deviation by explicitly modeling and correcting spatiotemporal residuals?
\item \textbf{RQ3:} How does ReLearner influence the training procedure of STNNs, particularly in terms of convergence stability and optimization efficiency?
\item \textbf{RQ4:} What are the computational cost and architecture efficiency of ReLearner, and do its performance gains stem from architectural innovation rather than increased model complexity?
\item \textbf{RQ5:} How sensitive is the performance of ReLearner to hyperparameter configurations, such as kernel types or residual propagation layers?
\item \textbf{RQ6:} How does ReLearner differ from temporal shifts technologies in addressing spatiotemporal input–label deviation?
\item \textbf{RQ7:} How does ReLearner differ from prior approaches, STID, in addressing spatiotemporal input–label deviation?
\end{itemize}

\subsection{Experimental Setup}

\noindent\ding{182} \textbf{Datasets.} We evaluate our approach on 11 real-world spatiotemporal datasets from two major domains: transportation and atmospheric monitoring. In the transportation domain, we employ several widely used benchmarks, including the recently introduced LargeST benchmark \cite{liu2024largest}, as well as the normal and common PEMS3-Stream \cite{chen2021trafficstream}, PeMS0\{3,4, 7,8\} \cite{song2020spatial}, and METR-LA \cite{li2017diffusion} datasets. 

LargeST is a large-scale, comprehensive traffic dataset specifically designed for spatiotemporal prediction. It collects highway speed data, recorded every 15 minutes, from up to 8,600 highway sensors in the California Performance Measurement System (PeMS). It comprises four subsets: SD, GBA, GLA, and CA. For PeMS3D-Stream, the raw 30-second traffic flow data collected by CalTrans are aggregated into 5-minute intervals, yielding a dataset of 655 sensors in California’s North Central Area. We adopt the July 2011 portion of the data for evaluation.
In the atmospheric domain, we employ the KnowAir dataset \cite{wang2020pm2}, which records four years (2015–2018) of PM2.5 concentrations from 184 monitoring stations across China. A comprehensive summary of all datasets, including node count, sampling frequency, and temporal coverage, is provided in Table~\ref{Dataset summary}. 

\begin{table}[htbp]
  \setlength{\arrayrulewidth}{1.0pt}
  \renewcommand{\arraystretch}{1.2}
  \centering
  \caption{The details of traffic datasets used in this paper.}
\resizebox{\linewidth}{!}{
    \begin{tabular}{lcccc}
    \hline
    \rowcolor[rgb]{.867,.875,  .91}\textbf{Dataset} & \textbf{Nodes} & \textbf{Edges} & \textbf{Time Range} & \textbf{Frames} \\
    \hline
    LargeST-SD    & 716   & 17,319 & 01/01/2019 - 31/12/2019 & 525,888 \\
    \rowcolor[rgb]{ .949,  .949,  .949}LargeST-GBA   & 2,352 & 61,246 & 01/01/2019 - 31/12/2019 & 525,888 \\
    LargeST-GLA    & 3,834 & 98,703 & 01/01/2019 - 31/12/2019 & 525,888 \\
    \rowcolor[rgb]{ .949,  .949,  .949}LargeST-CA    & 8,600 & 201,363 & 01/01/2019 - 31/12/2019 & 525,888 \\
    \rowcolor[rgb]{ .949,  .949,  .949}PEMS03 & 358   & 546   & 09/01/2018 – 11/30/2018 & 26,208 \\
    PEMS04 & 307   & 338   & 01/01/2018 – 02/28/2018  & 16,992 \\
    \rowcolor[rgb]{ .949,  .949,  .949}PEMS08 & 170   & 276   & 07/01/2016 – 08/31/2016 & 17,856 \\
    PEMS07 & 883   & 865   & 05/01/2017 – 08/06/2017 & 28,224 \\
    \rowcolor[rgb]{ .949,  .949,  .949}METR-LA & 207   & 1,515 & 03/01/2012 – 06/27/2012 & 34,272 \\
    PEMS3-Stream &655&1,577 &07/01/2011 - 07/31/2011 &8,928\\
    \rowcolor[rgb]{ .949,  .949,  .949}KnowAir & 184   & 3,796 & 01/01/2015 - 12/31/2018 & 3,4380\\
    \hline
    \end{tabular}
    }%
  \label{Dataset summary}%
\end{table}

\noindent\ding{183} \textbf{Baselines.} To evaluate the effectiveness of ReLearner, we integrated it into 14 representative spatiotemporal forecasting models. The baselines include: \underline{\textbf{Traditional methods}} such as HL~\cite{liang2021revisiting} and LSTM; \underline{\textbf{MLP-based STNNs}} including STID~\cite{10.1145/3511808.3557702} and BigST~\cite{han2024bigst}; \underline{\textbf{GNN-based STNNs}} such as STGCN~\cite{yu2017spatio}, AGCRN~\cite{bai2020adaptive}, DGCRN~\cite{li2023dynamic}, DDGCRN~\cite{weng2023decomposition}, GC-LSTM~\cite{qi2019hybrid}, PM\textsubscript{2.5}GNN~\cite{wang2020pm2}, and GWNet~\cite{wu2019graph}; \underline{\textbf{Transformer-based STNNs}} comprising STAEFormer~\cite{liu2023spatio}, STNN~\cite{he2020stnn}, and D\textsuperscript{2}STGNN~\cite{shao2022decoupled}. For broader comparison, we also evaluated two general \underline{\textbf{spatiotemporal drift learning methods}}: RevIN~\cite{kim2021reversible} and Dish-TS~\cite{fan2023dish}.

\noindent\ding{184} \textbf{Metric}. To assess the efficacy of our framework, we employed metrics commonly utilized in spatiotemporal prediction tasks, including Mean Absolute Error (MAE), Root Mean Square Error (RMSE), and Mean Absolute Percentage Error (MAPE). Moreover, we consider specific metrics, including Critical Success Index (CSI), Probability of Detection (POD), and False Alarm Rate (FAR), to assess the performance of the system in atmospheric tasks. Let the prediction value be $\hat{\mathbf{y}}_{:,u}$ and ground truth value be $\mathbf{y}_{:,u}$ for a specific node $u$, and we choose $\varepsilon=75~\mu g/m^3$ to be the demarcation point of good air quality \cite{zhao2016evolution}. Hence the specific metrics satisfy,
\begin{align}
    \text{CSI}&=\frac{\#\left\{t\mid\mathbf{y}_{t,u}\geq\varepsilon,\hat{\mathbf{y}}_{t,u}\geq\varepsilon\right\}}{24-\#\left\{t\mid\mathbf{y}_{t,u}<\varepsilon,\hat{\mathbf{y}}_{t,u}<\varepsilon\right\}},\\
    \text{POD}&=\frac{\#\left\{t\mid\mathbf{y}_{t,u}\geq\varepsilon,\hat{\mathbf{y}}_{t,u}\geq\varepsilon\right\}}{\#\left\{t\mid\mathbf{y}_{t,u}\geq\varepsilon\right\}},\\
    \text{FAR}&=\frac{\#\left\{t\mid\mathbf{y}_{t,u}<\varepsilon,\hat{\mathbf{y}}_{t,u}\geq\varepsilon\right\}}{\#\left\{t\mid\hat{\mathbf{y}}_{t,u}\geq\varepsilon\right\}},
\end{align}
where $\#$ calculates the cardinal of the following set. It is crucial to acknowledge that smaller metrics indicate superior model performance for all metrics except CSI and POD. Conversely, the opposite is true for CSI and POD metrics.

\noindent\ding{185} \textbf{Setting.}~All datasets are split into training, validation, and test sets with a ratio of 6:2:2 along the temporal dimension. We use the AdamW optimizer \cite{loshchilov2017decoupled} with a learning rate of 0.002, and optimize the models using MAE as the loss function. All experiments are conducted on an NVIDIA A100 GPU with 40 GB memory, and the implementation is based on the PyTorch framework using Python 3.8.3. The lengths of both the input time window and the future prediction window are set to 12 for traffic datasets and 24 for atmospheric datasets. \textit{When training STNNs with ReLearner, we keep all hyperparameters of the original STNNs unchanged to ensure that the performance improvements arise solely from ReLearner.}

\begin{table*}[!t]
	\belowrulesep=0.5pt
	\aboverulesep=0.5pt
	\centering
	\caption{Prediction performance of models on traffic datasets. `$\Delta$' reports the improvement of average prediction performance during 12 time steps using ReLearner relative to base models.}
    \label{results_traffic}
	\resizebox{\textwidth}{!}{    
		\begin{tabular}{c|lccccccccc|ccc}
			\toprule
			\multicolumn{2}{c}{\multirow{2}[0]{*}{\textbf{Method}}} & \multicolumn{3}{c}{\textbf{Horizon 3}} & \multicolumn{3}{c}{\textbf{Horizon 6}} & \multicolumn{3}{c|}{\textbf{Horizon 12}} & \multicolumn{3}{c}{\textbf{$\Delta$}} \\
			\cmidrule{3-14}    \multicolumn{2}{c}{} & MAE   & RMSE  & MAPE  & MAE   & RMSE  & MAPE  & MAE   & RMSE  & MAPE  & MAE   & RMSE  & MAPE \\
			\midrule
			\midrule
			\multirow{22}[0]{*}{\begin{sideways}LargeST-SD\end{sideways}} & LSTM  & 19.13  & 30.80  & 11.62  & 26.07  & 41.34  & 16.32  & 37.87  & 59.37  & 25.08  & -     & -     & - \\
			& LSTM + ReLearner & \textbf{17.55} & \textbf{27.97} & \textbf{11.42} & \textbf{21.71} & \textbf{34.64} & \textbf{14.68} & \textbf{26.91} & \textbf{44.06} & \textbf{19.44} & \cellcolor{lv}+19.50\% & \cellcolor{lv}+18.20\% & \cellcolor{lv}+13.37\% \\
			\cmidrule{2-14}          & STID  & 15.39  & 25.71  & 9.90  & 18.05  & 30.53  & 12.02  & 22.01  & 39.06  & 15.35  & -     & -     & - \\
			& STID + ReLearner & \textbf{14.66} & \textbf{24.71} & \textbf{9.45} & \textbf{17.08} & \textbf{29.03} & \textbf{11.20} & \textbf{20.83} & \textbf{35.64} & \textbf{14.31} & \cellcolor{lv}+5.11\% & \cellcolor{lv}+5.95\% & \cellcolor{lv}+6.06\% \\
			\cmidrule{2-14}          & STAEformer & 15.68  & 25.71  & 10.65  & 18.33  & 30.42  & 12.66  & 22.77  & 38.64  & 15.99  & -     & -     & - \\
			& STAEformer + ReLearner & \textbf{15.48} & \textbf{25.92} & \textbf{10.14} & \textbf{17.91} & \textbf{30.31} & \textbf{11.86} & \textbf{21.61} & \textbf{37.13} & \textbf{14.96} & \cellcolor{lv}+3.09\% & \cellcolor{lv}+1.40\% & \cellcolor{lv}+4.89\% \\
			\cmidrule{2-14}          & STGCN  & 17.37  & 29.91  & 12.36  & 19.29  & 33.36  & 13.39  & 22.99  & 40.28  & 15.80  & -     & -     & - \\
			& STGCN + ReLearner & \textbf{16.05} & \textbf{27.39} & \textbf{10.80} & \textbf{18.30} & \textbf{31.45} & \textbf{12.21} & \textbf{22.17} & \textbf{39.22} & \textbf{15.12} & \cellcolor{lv}+5.73\% & \cellcolor{lv}+5.77\% & \cellcolor{lv}+8.79\% \\
			\cmidrule{2-14}          & STTN  & 18.11  & 28.92  & 11.33  & 21.26  & 34.33  & 13.30  & 25.78  & 41.43  & 17.06  & -     & -     & - \\
			& STTN + ReLearner & \textbf{15.94} & \textbf{25.76} & \textbf{10.66} & \textbf{18.49} & \textbf{30.26} & \textbf{12.33} & \textbf{22.54} & \textbf{38.58} & \textbf{16.10} & \cellcolor{lv}+12.18\% & \cellcolor{lv}+9.85\% & \cellcolor{lv}+6.12\% \\
			\cmidrule{2-14}          & ASTGCN & 20.23  & 32.17  & 13.09  & 25.94  & 40.54  & 17.13  & 32.34  & 50.86  & 22.23  & -     & -     & - \\
			& ASTGCN + ReLearner & \textbf{17.41} & \textbf{28.24} & \textbf{11.17} & \textbf{20.92} & \textbf{34.05} & \textbf{13.59} & \textbf{24.80} & \textbf{40.51} & \textbf{17.22} & \cellcolor{lv}+18.74\% & \cellcolor{lv}+16.00\% & \cellcolor{lv}+21.18\% \\
			\cmidrule{2-14}          & AGCRN & 15.57  & 28.49  & 11.39  & 17.66  & 31.44  & 12.86  & 21.40  & 40.44  & 16.35  & -     & -     & - \\
			& AGCRN + ReLearner & \textbf{15.14} & \textbf{25.49} & \textbf{10.16} & \textbf{17.39} & \textbf{29.70} & \textbf{11.70} & \textbf{20.95} & \textbf{29.96} & \textbf{14.91} & \cellcolor{lv}+2.03\% & \cellcolor{lv}+7.39\% & \cellcolor{lv}+9.57\% \\
			\cmidrule{2-14}          & DGCRN & 15.83  & 28.48  & 12.60  & 20.50  & 33.24  & 14.08  & 24.16  & 40.67  & 16.68  & -     & -     & - \\
			& DGCRN + ReLearner & \textbf{15.28} & \textbf{25.45} & \textbf{10.98} & \textbf{17.33} & \textbf{29.27} & \textbf{11.55} & \textbf{21.20} & \textbf{36.57} & \textbf{14.55} & \cellcolor{lv}+1.14\% & \cellcolor{lv}+0.07\% & \cellcolor{lv}+4.30\% \\
			\cmidrule{2-14}          & DDGCRN & 15.64  & 29.23  & 11.12  & 18.34  & 33.19  & 12.82  & 22.79  & 40.97  & 16.46  & -     & -     & - \\
			& DDGCRN + ReLearner & \textbf{15.59} & \textbf{28.00} & \textbf{11.10} & \textbf{18.13} & \textbf{31.53} & \textbf{11.99} & \textbf{22.10} & \textbf{39.17} & \textbf{15.02} & \cellcolor{lv}+1.73\% & \cellcolor{lv}+3.80\% & \cellcolor{lv}+5.95\% \\
			\cmidrule{2-14}          & D$^2$STGNN & 14.93  & 25.29  & 10.37  & 17.40  & 29.69  & 12.16  & 21.31  & 36.30  & 14.99  & -     & -     & - \\
			& D$^2$STGNN + ReLearner & \textbf{14.83} & \textbf{24.68} & \textbf{9.79} & \textbf{17.23} & \textbf{28.72} & \textbf{11.17} & \textbf{20.61} & \textbf{34.42} & \textbf{13.58} & \cellcolor{lv}+1.41\% & \cellcolor{lv}+3.55\% & \cellcolor{lv}+8.11\% \\
			\cmidrule{2-14}          & BigST & 15.83  & 26.04  & 11.38  & 18.17  & 31.13  & 13.12  & 22.92  & 39.63  & 16.34  & -     & -     & - \\
			& BigST + ReLearner & \textbf{14.39} & \textbf{24.27} & \textbf{10.24} & \textbf{17.56} & \textbf{29.09} & \textbf{11.78} & \textbf{21.01} & \textbf{36.01} & \textbf{14.88} & \cellcolor{lv}+8.63\% & \cellcolor{lv}+6.52\% & \cellcolor{lv}+10.31\% \\
			\midrule
			\midrule
			\multirow{22}[0]{*}{\begin{sideways}LargeST-GBA\end{sideways}} & LSTM  & 20.21  & 33.22  & 15.14  & 27.28  & 43.34  & 23.08  & 38.55  & 60.13  & 36.68  & -     & -     & - \\
			& LSTM + ReLearner & \textbf{17.67} & \textbf{31.24} & \textbf{15.14} & \textbf{24.47} & \textbf{38.37} & \textbf{22.09} & \textbf{31.77} & \textbf{49.32} & \textbf{32.92} & \cellcolor{lv}+11.89\% & \cellcolor{lv}+12.70\% & \cellcolor{lv}+6.34\% \\
			\cmidrule{2-14}          & STID  & 17.80  & 29.56  & 14.32  & 21.04  & 34.76  & 17.28  & 25.23  & 42.22  & 21.48  & -     & -     & - \\
			& STID + ReLearner & \textbf{17.43} & \textbf{29.35} & \textbf{13.37} & \textbf{20.43} & \textbf{34.19} & \textbf{15.94} & \textbf{24.35} & \textbf{40.90} & \textbf{19.92} & \cellcolor{lv}+2.74\% & \cellcolor{lv}+1.88\% & \cellcolor{lv}+7.08\% \\
			\cmidrule{2-14}          & STAEformer & 18.55  & 29.94  & 14.99  & 21.69  & 34.65  & 16.87  & 26.42  & 41.50  & 21.31  & -     & -     & - \\
			& STAEformer + ReLearner & \textbf{18.05} & \textbf{29.40} & \textbf{14.43} & \textbf{21.18} & \textbf{34.04} & \textbf{16.17} & \textbf{25.76} & \textbf{41.21} & \textbf{20.80} & \cellcolor{lv}+3.46\% & \cellcolor{lv}+1.16\% & \cellcolor{lv}+1.93\% \\
			\cmidrule{2-14}          & STGCN  & 20.47  & 33.85  & 15.26  & 22.75  & 37.49  & 17.03  & 25.51  & 42.13  & 19.80  & -     & -     & - \\
			& STGCN + ReLearner & \textbf{19.38} & \textbf{32.14} & \textbf{14.52} & \textbf{22.16} & \textbf{36.65} & \textbf{16.73} & \textbf{25.61} & \textbf{42.62} & \textbf{20.04} & \cellcolor{lv}+2.31\% & \cellcolor{lv}+2.02\% & \cellcolor{lv}+3.25\% \\
			\cmidrule{2-14}          & STTN  & 18.92  & 30.48  & 15.25  & 22.31  & 35.50  & 18.88  & 26.59  & 42.58  & 23.35  & -     & -     & - \\
			& STTN + ReLearner & \textbf{18.70} & \textbf{29.98} & \textbf{15.07} & \textbf{21.99} & \textbf{35.02} & \textbf{17.89} & \textbf{26.05} & \textbf{42.06} & \textbf{22.49} & \cellcolor{lv}+1.73\% & \cellcolor{lv}+1.45\% & \cellcolor{lv}+4.27\% \\
			\cmidrule{2-14}          & ASTGCN & 21.53  & 34.07  & 17.44  & 26.31  & 40.36  & 24.71  & 34.00  & 52.97  & 30.15  & -     & -     & - \\
			& ASTGCN + ReLearner & \textbf{19.91} & \textbf{31.60} & \textbf{16.29} & \textbf{24.73} & \textbf{38.52} & \textbf{21.18} & \textbf{30.51} & \textbf{47.44} & \textbf{27.73} & \cellcolor{lv}+7.01\% & \cellcolor{lv}+5.35\% & \cellcolor{lv}+9.04\% \\
			\cmidrule{2-14}          & AGCRN & 18.04  & 30.11  & 13.99  & 20.79  & 34.29  & 16.33  & 24.28  & 39.65  & 20.21  & -     & -     & - \\
			& AGCRN + ReLearner & \textbf{16.80} & \textbf{28.56} & \textbf{12.52} & \textbf{20.10} & \textbf{33.45} & \textbf{14.98} & \textbf{23.73} & \textbf{39.10} & \textbf{18.79} & \cellcolor{lv}+2.00\% & \cellcolor{lv}+1.80\% & \cellcolor{lv}+6.95\% \\
			\cmidrule{2-14}          & DGCRN & 18.02  & 28.97  & 15.23  & 21.09  & 33.88  & 18.13  & 25.86  & 41.02  & 23.66  & -     & -     & - \\
			& DGCRN + ReLearner & \textbf{17.86} & \textbf{28.43} & \textbf{15.10} & \textbf{20.89} & \textbf{32.76} & \textbf{18.03} & \textbf{25.69} & \textbf{40.89} & \textbf{22.35} & \cellcolor{lv}+1.38\% & \cellcolor{lv}+0.57\% & \cellcolor{lv}+4.67\% \\
			\cmidrule{2-14}          & DDGCRN & 17.86  & 29.11  & 15.26  & 21.05  & 33.86  & 18.07  & 25.62  & 41.15  & 23.55  & -     & -     & - \\
			& DDGCRN + ReLearner & \textbf{17.47} & \textbf{28.22} & \textbf{15.24} & \textbf{20.72} & \textbf{32.74} & \textbf{18.00} & \textbf{25.57} & \textbf{40.81} & \textbf{22.19} & \cellcolor{lv}+2.64\% & \cellcolor{lv}+0.81\% & \cellcolor{lv}+5.49\% \\
			\cmidrule{2-14}          & D$^2$STGNN & 17.23  & 29.91  & 12.22  & 20.50  & 34.79  & 14.96  & 25.13  & 41.80  & 19.67  & -     & -     & - \\
			& D$^2$STGNN + ReLearner & \textbf{16.83} & \textbf{29.27} & \textbf{11.97} & \textbf{20.32} & \textbf{34.35} & \textbf{14.56} & \textbf{24.81} & \textbf{41.21} & \textbf{19.27} & \cellcolor{lv}+1.03\% & \cellcolor{lv}+1.19\% & \cellcolor{lv}+5.13\% \\
			\cmidrule{2-14}          & BigST & 18.29  & 29.79  & 15.18  & 21.98  & 35.39  & 18.91  & 26.74  & 43.48  & 23.91  & -     & -     & - \\
			& BigST + ReLearner & \textbf{17.37} & \textbf{29.26} & \textbf{13.78} & \textbf{20.88} & \textbf{33.87} & \textbf{15.89} & \textbf{24.79} & \textbf{41.52} & \textbf{20.32} & \cellcolor{lv}+17.36\% & \cellcolor{lv}+3.48\% & \cellcolor{lv}+15.50\% \\
			\bottomrule
	\end{tabular}}
\end{table*}%

\begin{table*}[!t]

  \centering
  \caption{Predictive performance of the model on KnowAir dataset. `Average improvement' reports the improvement of average prediction performance during 12 time steps using ReLearner relative to base models}\resizebox{\linewidth}{!}{   
    \begin{tabular}{clccccccccccccc}
    \toprule
    \multirow{2}[0]{*}{\textbf{Dataset}} & \multicolumn{2}{c}{\textbf{Method}} & \multicolumn{6}{c}{\textbf{Average}}                   & \multicolumn{6}{c}{\textbf{Average relative  improvement}} \\
\cmidrule{2-15}          & Basemodel & ReLearner & MAE   & RMSE  & MAPE & CSI & POD & FAR & MAE & RMSE & MAPE & CSI & POD & FAR \\
    \midrule
    \multirow{14}[14]{*}{\begin{sideways}KnowAir\end{sideways}} & \multirow{2}[0]{*}{LSTM} & -     & 20.88 & 35.97 & 39.80  & 72.03 & 88.68 & 20.68 & -     & -     & -     & -     & -     & - \\
          &       & \ding{52} & \textbf{20.63} & \textbf{35.47} & \textbf{39.57} & \textbf{72.55} & \textbf{89.46} & \textbf{20.66} & \cellcolor{lv}+1.20\%   & \cellcolor{lv}+1.37\%  & \cellcolor{lv}+0.58\%  & \cellcolor{lv}+0.72\%  & \cellcolor{lv}+0.88\%  & \cellcolor{lv}+0.10\%\\
\cmidrule{2-15}          & \multirow{2}[0]{*}{GC-LSTM} & -     & 20.91 & 36.00    & 39.32 & 72.30  & 88.94 & 20.56 & -     & -     & -     & -     & -     & - \\
          &       & \ding{52} & \textbf{19.49} & \textbf{33.89} & \textbf{36.14} & \textbf{73.71} & \textbf{88.36} & \textbf{18.36} & \cellcolor{lv}+6.79\%  & \cellcolor{lv}+5.86\%  & \cellcolor{lv}+8.09\%  & \cellcolor{lv}+1.95\%  & \cellcolor{lv}+0.65\% & \cellcolor{lv}+10.70\% \\
\cmidrule{2-15}          & \multirow{2}[0]{*}{STID} & -     & 17.50  & 31.63 & 33.04 & 76.04 & 88.75 & 15.85 & -     & -     & -     & -     & -     & - \\
          &       & \ding{52} & \textbf{17.47} & \textbf{30.88} & \textbf{32.07} & \textbf{76.60} & \textbf{92.13} & \textbf{15.33} & \cellcolor{lv}+0.17\%  & \cellcolor{lv}+2.37\%  & \cellcolor{lv}+2.94\%  & \cellcolor{lv}+0.74\%  & \cellcolor{lv}+3.81\%  & \cellcolor{lv}+3.28\% \\
\cmidrule{2-15}          & \multirow{2}[0]{*}{STAEFormer} & -     & 18.28 & 31.33 & 36.42 & 75.68 & 90.89 & 19.67 & -     & -     & -     & -     & -     & - \\
          &       & \ding{52} & \textbf{17.56} & \textbf{30.41} & \textbf{30.41} & \textbf{76.54} & \textbf{92.28} & \textbf{18.22} & \cellcolor{lv}+3.94\%  & \cellcolor{lv}+2.94\%  & \cellcolor{lv}+16.50\%  & \cellcolor{lv}+1.14\%  & \cellcolor{lv}+1.53  & \cellcolor{lv}+7.37\% \\
\cmidrule{2-15}          & \multirow{2}[0]{*}{AGCRN} & -     & 19.83 & 34.25 & 39.00    & 73.24 & 90.14 & 20.38 & -     & -     & -     & -     & -     & - \\
          &       & \ding{52} & \textbf{19.61} & \textbf{33.41} & \textbf{38.62} & \textbf{73.32} & \textbf{90.67} & \textbf{19.49} & \cellcolor{lv}+1.11\%  & \cellcolor{lv}+2.45\%  & \cellcolor{lv}+0.97\%  & \cellcolor{lv}+0.11\%  & \cellcolor{lv}+0.59\%  & \cellcolor{lv}+4.37\% \\
\cmidrule{2-15}          & \multirow{2}[0]{*}{DDGCRN} & -     & 19.68 & 34.60  & 36.36 & 72.94 & 86.29 & 18.49 & -     & -     & -     & -     & -     & - \\
          &       & \ding{52} & \textbf{19.00} & \textbf{34.22} & \textbf{36.28} & \textbf{72.91} & \textbf{88.54} & \textbf{17.86} & \cellcolor{lv}+3.46\%  & \cellcolor{lv}+1.01\%  & \cellcolor{lv}+0.22  & \cellcolor{lv}+0.04\% & \cellcolor{lv}+2.61\%  & \cellcolor{lv}+3.41\% \\
\cmidrule{2-15}          & \multirow{2}[0]{*}{PM$_{2.5}$GNN} & -     & 20.28 & 34.97 & 38.25 & 72.27 & 88.43 & 20.18 & -     & -     & -     & -     & -     & - \\
          &       & \ding{52} & \textbf{18.99} & \textbf{32.97} & \textbf{36.75} & \textbf{74.35} & \textbf{89.45} & \textbf{18.50} & \cellcolor{lv}+6.36\%  & \cellcolor{lv}+5.72\%  & \cellcolor{lv}+3.92\%  & \cellcolor{lv}+2.88\%  & \cellcolor{lv}+1.15\%  & \cellcolor{lv}+8.33\% \\
    \bottomrule
    \end{tabular}}
  \label{results_air}%
\end{table*}%

\begin{table*}[t]
	\belowrulesep=0.5pt
	\aboverulesep=0.1pt
	\centering
	\caption{Prediction performance of models on 6 small scale and 2 large scale traffic datasets. `$\Delta$' reports the improvement of average prediction performance during 12 time steps using ReLearner relative to base models.}
	\resizebox{\linewidth}{!}{\begin{tabular}{cccc|ccc|ccc|ccc}
			\toprule
			\rowcolor[rgb]{ .949,  .949,  .949}\textbf{Dataset} & \multicolumn{3}{c|}{\textbf{PeMS03}} & \multicolumn{3}{c|}{\textbf{PeMS04}} & \multicolumn{3}{c|}{\textbf{PeMS08}} & \multicolumn{3}{c}{\textbf{METR-LA}} \\
			\midrule
			Method & MAE   & RMSE  & MAPE  & MAE   & RMSE  & MAPE  & MAE   & RMSE  & MAPE  & MAE   & RMSE  & MAPE \\
			\midrule
			LSTM  & 21.33 & 35.11 & 23.33 & 23.81 & 36.62 & 18.12 & 21.31 & 32.10 & 17.47 & 3.55  & 7.10  & 10.18 \\
			+ ReLearner   & \textbf{16.21} & \textbf{27.44} & \textbf{15.83} & \textbf{21.37} & \textbf{33.65} & \textbf{14.30} & \textbf{16.27} & \textbf{26.04} & \textbf{10.43} & \textbf{3.27} & \textbf{6.42} & \textbf{9.28} \\
			\rowcolor{lv}$\Delta$  & +24.00\% & +21.84\% & +32.14\% & +10.24\% & +8.11\% & +21.08\% & +23.65\% & +18.87\% & +40.29\% & +7.88\% & +9.57\% & +8.84\% \\
			\midrule
			STID  & 15.36 & 25.97 & 16.20 & 18.60 & 30.14 & 12.28 & 14.21 & 23.43 & 9.28  & 3.22  & 6.58  & 9.16 \\
			+ ReLearner   & \textbf{15.08} & \textbf{25.75} & \textbf{15.49} & \textbf{17.89} & \textbf{29.66} & \textbf{11.99} & \textbf{13.80} & \textbf{23.09} & \textbf{9.18} & \textbf{3.06} & \textbf{6.19} & \textbf{8.54} \\
			\rowcolor{lv}$\Delta$  & +1.82\% & +0.85\% & +4.38\% & +3.81\% & +1.59\% & +2.36\% & +2.88\% & +1.45\% & +1.07\% & +4.96\% & +5.92\% & +6.76\% \\
			\midrule
			STAEformer & 15.45 & 27.39 & 15.08 & 18.17 & 29.99 & 11.92 & 13.59 & 23.93 & 8.83  & 3.02  & 6.07  & 8.35 \\
			+ ReLearner   & \textbf{15.13} & \textbf{26.80} & \textbf{14.72} & \textbf{18.02} & \textbf{28.56} & \textbf{11.08} & \textbf{13.36} & \textbf{23.35} & \textbf{8.43} & \textbf{2.99} & \textbf{6.06} & \textbf{8.16} \\
			\rowcolor{lv}$\Delta$  & +2.07\% & +2.15\% & +2.38\% & +0.82\% & +4.76\% & +7.04\% & +1.69\% & +2.42\% & +4.53\% & +1.00\% & +0.16\% & +2.27\% \\
			\midrule
			STGCN & 17.47 & 28.81 & 17.08 & 20.01 & 31.82 & 13.32 & 15.69 & 25.19 & 10.31 & 3.11  & 6.26  & 8.60 \\
			+ ReLearner   & \textbf{16.74} & \textbf{28.12} & \textbf{16.89} & \textbf{19.13} & \textbf{30.83} & \textbf{13.15} & \textbf{15.07} & \textbf{24.50} & \textbf{9.91} & \textbf{3.03} & \textbf{6.10} & \textbf{8.24} \\
			\rowcolor{lv}$\Delta$  & +4.18\% & +2.39\% & +1.11\% & +4.39\% & +3.11\% & +1.27\% & +3.95\% & +2.73\% & +3.87\% & +2.57\% & +2.55\% & +4.18\% \\
			\midrule
			AGCRN & 16.06 & 28.49 & 15.85 & 19.83 & 32.26 & 12.97 & 15.59 & 25.07 & 10.19 & 3.15  & 6.38  & 8.81 \\
			+ ReLearner   & \textbf{15.27} & \textbf{26.91} & \textbf{14.71} & \textbf{18.85} & \textbf{30.95} & \textbf{12.41} & \textbf{15.15} & \textbf{24.89} & \textbf{10.07} & \textbf{3.14} & \textbf{6.31} & \textbf{8.71} \\
			\rowcolor{lv}$\Delta$  & +4.91\% & +5.54\% & +7.19\% & +4.94\% & +4.06\% & +4.31\% & +2.82\% & +0.71\% & +1.17\% & +0.31\% & +1.09\% & +1.13\% \\
			\midrule
			GWNet & 16.77 & 27.57 & 16.11 & 21.79 & 33.79 & 14.85 & 18.03 & 27.86 & 9.41  & 3.03  & 6.04  & 8.22 \\
			+ ReLearner   & \textbf{16.41} & \textbf{27.00} & \textbf{15.20} & \textbf{21.11} & \textbf{32.87} & \textbf{14.37} & \textbf{17.96} & \textbf{27.56} & \textbf{9.10} & \textbf{3.01} & \textbf{6.02} & \textbf{8.21} \\
			\rowcolor{lv}$\Delta$  & +2.14\% & +2.06\% & +5.64\% & +3.12\% & +2.72\% & +3.23\% & +0.38\% & +1.07\% & +3.29\% & +0.66\% & +0.33\% & +0.12\% \\
			\midrule
			D$^2$STGNN & 14.62 & 25.09 & 14.23 & 18.53 & 30.68 & 12.17 & 14.36 & 23.76 & 9.37  & 3.01  & 6.05  & 8.41 \\
			+ ReLearner   & \textbf{14.51} & \textbf{24.54} & \textbf{13.92} & \textbf{18.22} & \textbf{30.17} & \textbf{12.00} & \textbf{13.77} & \textbf{23.35} & \textbf{8.99} & \textbf{2.94} & \textbf{6.02} & \textbf{8.12} \\
			\rowcolor{lv}$\Delta$  & +0.75\% & +1.35\% & +2.14\% & +1.67\% & +1.66\% & +1.40\% & +4.10\% & +1.73\% & +4.05\% & +1.00\% & +0.50\% & +3.45\% \\
			\midrule[\heavyrulewidth]
			\rowcolor[rgb]{ .949,  .949,  .949}\multicolumn{1}{c}{\textbf{Dataset}} & \multicolumn{3}{c|}{\textbf{PeMS07}} & \multicolumn{3}{c|}{\textbf{PEMS3-Stream}} & \multicolumn{3}{c|}{\textbf{GLA}} & \multicolumn{3}{c}{\textbf{CA}} \\
			\midrule
			Method & MAE   & RMSE  & MAPE  & MAE   & RMSE  & MAPE  & MAE   & RMSE  & MAPE  & MAE   & RMSE  & MAPE \\
			\midrule
			STNorm & 20.56 & 34.88 & 8.63  & 11.92&	18.56&	15.63 & 21.31& 	34.53& 	14.06 & 19.30 & 31.98 & 14.02 \\
			+ ReLearner   & \textbf{19.88} & \textbf{33.25} & \textbf{8.30} & \textbf{11.63} & \textbf{18.14} & \textbf{15.26} & \textbf{21.09} & \textbf{33.80} & \textbf{12.71} & \textbf{19.03} & \textbf{31.35} & \textbf{13.23} \\
			\rowcolor{lv}$\Delta$  & +3.30\% & +4.67\% & +3.82\% & +2.43\% & +2.37\% & +1.29\% & +1.03\% & +2.11\% & +9.60\% & +1.40\% & +1.94\% & +5.62\% \\
			\midrule
			GWNet & 24.55 & 38.36 & 10.15 & 12.44&	18.98	&16.78 &  21.21 &33.63 & 13.73 & 21.74 & 34.22 & 17.41 \\
			+ ReLearner   & \textbf{23.46} & \textbf{37.65} & \textbf{9.86} & \textbf{11.59} & \textbf{17.81} & \textbf{15.33} & \textbf{20.65} & \textbf{32.97} & \textbf{13.42} &   \textbf{19.97}   &   \textbf{32.26}    &   \textbf{14.28}  \\
			\rowcolor{lv}$\Delta$  & +4.44\% & +1.85\% & +2.85\% & +3.74\% & +3.56\% & +8.64\% & +3.60\% & +1.96\% & +2.25\% & +8.14\%   & +3.37\%   & +3.13\% \\
			\midrule
			STID  & 19.52 & 32.90 & 8.27  & 12.58&	19.36&	16.21&21.69&35.20 & 14.39 & 19.10 & 32.00 & 14.73 \\
			+ ReLearner   & \textbf{19.17} & \textbf{32.45} & \textbf{8.18} & \textbf{11.72} & \textbf{17.96} & \textbf{15.57} & \textbf{21.46} & \textbf{34.57} & \textbf{13.61} & \textbf{18.50} & \textbf{30.92} & \textbf{13.61} \\
			\rowcolor{lv}$\Delta$  & +1.79\% & +1.37\% & +1.09\% & +6.83\% & +7.23\% & +3.95\% & +1.06\% & +1.78\% & +5.28\% & +3.14\% & +3.38\% & +7.60\% \\
			\midrule
			STGCN & 21.62 & 34.89 & 13.99 & 13.42&20.27&17.73 & 22.62&38.71&14.12
			& 21.36 & 36.42 & 16.55 \\
			+ ReLearner   & \textbf{20.47} & \textbf{32.76} & \textbf{12.94} & \textbf{12.07} & \textbf{19.14} & \textbf{16.56} & \textbf{21.51} & \textbf{37.15} & \textbf{13.19} & \textbf{19.85} & \textbf{34.30} & \textbf{14.43} \\
			\rowcolor{lv}$\Delta$  & +10.05\% & +5.57\% & +6.60\% & +5.73\% & +4.91\% & +4.03\% & +6.59\% & +2.02\% & +0.35\% & +7.06\% & +5.82\% & +12.80\% \\
			\midrule
			D$^2$STGNN & 19.77 & 33.08 & 8.40  & 12.98&20.36&17.24& \multicolumn{6}{c}{\multirow{3}[2]{*}{Out of memory}} \\
			+ ReLearner   & \textbf{19.52} & \textbf{32.53} & \textbf{8.11} & \textbf{11.72} & \textbf{17.96} & \textbf{16.21} & \multicolumn{3}{c}{} & \multicolumn{3}{c}{} \\
			\cellcolor{lv}$\Delta$  & \cellcolor{lv}+1.26\% & \cellcolor{lv}+1.66\% & \cellcolor{lv}+3.45\% & \cellcolor{lv}+9.71\% & \cellcolor{lv}+11.78\% & \cellcolor{lv}+6.08\% & \multicolumn{3}{c}{} & \multicolumn{3}{c}{} \\
			\bottomrule
	\end{tabular}}%
	\label{traffic-prefo}%
\end{table*}%

\begin{figure*}[!t]
    \centering
    \includegraphics[width=1\linewidth]{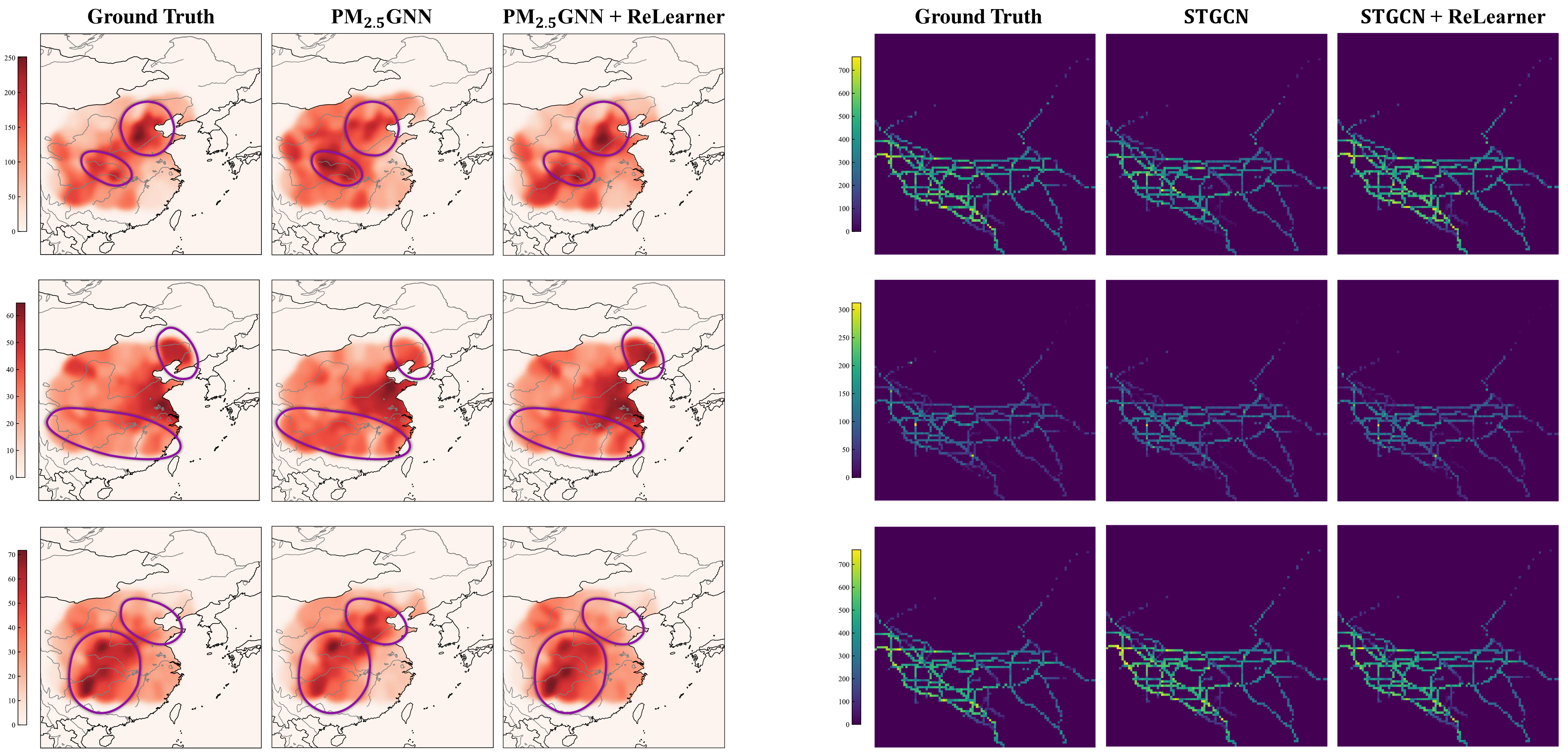} 
    \caption{Visualization of prediction comparison between PM$_{2.5}$GNN and it with ReLearner on KnowAir (Left) and LargeST-GLA (Right).}
    \label{kapredvis}
\end{figure*}

\subsection{Analysis of Predictive Performance (RQ1)}
We present the results in three tables: Table~\ref{results_traffic} reports performance on two representative datasets-LargeST-SD and LargeST-GBA datasets; Table~\ref{results_air} summarizes results on the atmospheric KnowAir dataset.
Table~\ref{traffic-prefo} provides additional comparisons on both small- and large-scale traffic datasets, including the PeMS series, METR-LA, PEMS3-Stream, and LargeST-CA. Commonly, integrating ReLearner consistently improves the predictive accuracy and robustness of various STNN backbones.

As shown in Table~\ref{results_traffic}, stronger baselines such as AGCRN and STID obtain additional performance gains, while weaker architectures such as ASTGCN benefit substantially—achieving improvements of up to 15\% on LargeST-SD. Even highly complexity models, including D$^2$STGNN and DDGCRN, exhibit consistent enhancement, indicating that ReLearner complements rather than interferes with sophisticated backbone designs. Moreover, Table~\ref{results_traffic} further demonstrates that ReLearner maintains strong scalability across both data-scarce environments (e.g., PEMS3-Stream) and large-scale settings (e.g., LargeST-CA with 8,600 nodes), continuing to deliver stable accuracy gains and illustrating its reliability under massive spatial structures. Table~\ref{results_air} shows that ReLearner also improves specialized models developed for air-quality forecasting, including GC-LSTM and PM$_{2.5}$GNN. Evaluated using widely adopted meteorological indicators—CSI, POD, and FAR—ReLearner consistently achieves superior performance across all metrics. 

We further visualize prediction results on KnowAir and LargeST-GLA, using PM$_{2.5}$GNN and STGCN as representative backbones (Fig.~\ref{kapredvis}). On KnowAir, ReLearner helps the model better capture local variations and smooth transitions, demonstrating its ability to generalize under noisy atmospheric dynamics. On LargeST-GLA, it strengthens the response of backbones to sudden traffic fluctuations, producing predictions that align more closely with the ground truth. These visualizations highlight adaptive correction capability of ReLearner and its robustness to abrupt spatiotemporal variations.

In summary, across diverse domains, datasets, and model architectures, ReLearner consistently improves predictive accuracy and stability. These gains arise not from an increase in parameter count (as analyzed in Section~\ref{notequalin}) but from its principled modeling of spatiotemporal input-label deviation between inputs and labels, validating the effectiveness of the proposed spatiotemporal residual theory.

\subsection{Effectiveness in Handling Spatiotemporal Input–Label Deviation (RQ2)}
We assess ReLearner's effectiveness in addressing the deviation in spatial and temporal dimensions. Temporal deviation is evaluated based on samples where the increase ratio of input data mean compared to label mean exceeds 75\%. Spatial deviation is identified when the similarity of input sequences between two nodes ranks in the top 20\%, while their predicted label similarity falls within the lowest 5\%. Experimental results on LargeST-SD dataset in Table \ref{ti22} demonstrate that our model enhances STNNs' modeling capacity for spatiotemporal input-labels deviation. Prediction visualizations are presented in Fig.~\ref{fig_ikeshihua}. Although the model improves node distinctiveness representation in STID through node embeddings, inefficient utilization of label features hampers comprehensive modeling of  input-label deviation. 

\begin{table}[!htbp]
  \centering
  \caption{Temporal and spatial deviation modeling.}
  \resizebox{\linewidth}{!}{
    \begin{tabular}{cccccccc}
    \toprule
    \multirow{2}[1]{*}{\textbf{Deviation}} & \multicolumn{3}{c}{\textbf{Temporal}} &       & \multicolumn{3}{c}{\textbf{Spatial}} \\
\cmidrule{2-4}\cmidrule{6-8}          & MAE   & RMSE  & MAPE  &       & MAE   & RMSE  & MAPE \\
    \midrule
    STGCN & 27.67 & 40.42 & 45.20 &       & 29.43 & 43.61 & 46.34 \\
    + ReLearner & \textbf{24.34} & \textbf{37.60} & \textbf{42.27} &       & \textbf{25.12} & \textbf{39.89} & \textbf{43.10} \\
    \midrule
    STID  & 27.06 & 41.40 & 43.63 &       & 25.78 & 39.15 & 28.68 \\
    + ReLearner & \textbf{20.55} & \textbf{32.16} & \textbf{33.50} &       & \textbf{23.06} & \textbf{38.22} & \textbf{24.31} \\
    \midrule
    STAEformer & 25.63 & 36.06 & 35.26 &       & 27.71 & 35.12 & 26.40 \\
    + ReLearner & \textbf{21.79} & \textbf{34.19} & \textbf{33.91} &       & \textbf{23.89} & \textbf{34.19} & \textbf{22.91} \\
    \midrule
    D$^2$STGNN & 21.09 & 33.37 & 34.64 &       & 24.31 & 34.62 & 24.71 \\ 
    + ReLearner & \textbf{20.31} & \textbf{31.37} & \textbf{32.85} &       & \textbf{21.09} & \textbf{33.74} & \textbf{21.13} \\
    \bottomrule
    \end{tabular}
   }%
  \label{ti22}%
\end{table}%

\begin{figure}[!ht]
\centering
\includegraphics[width=0.48\textwidth]{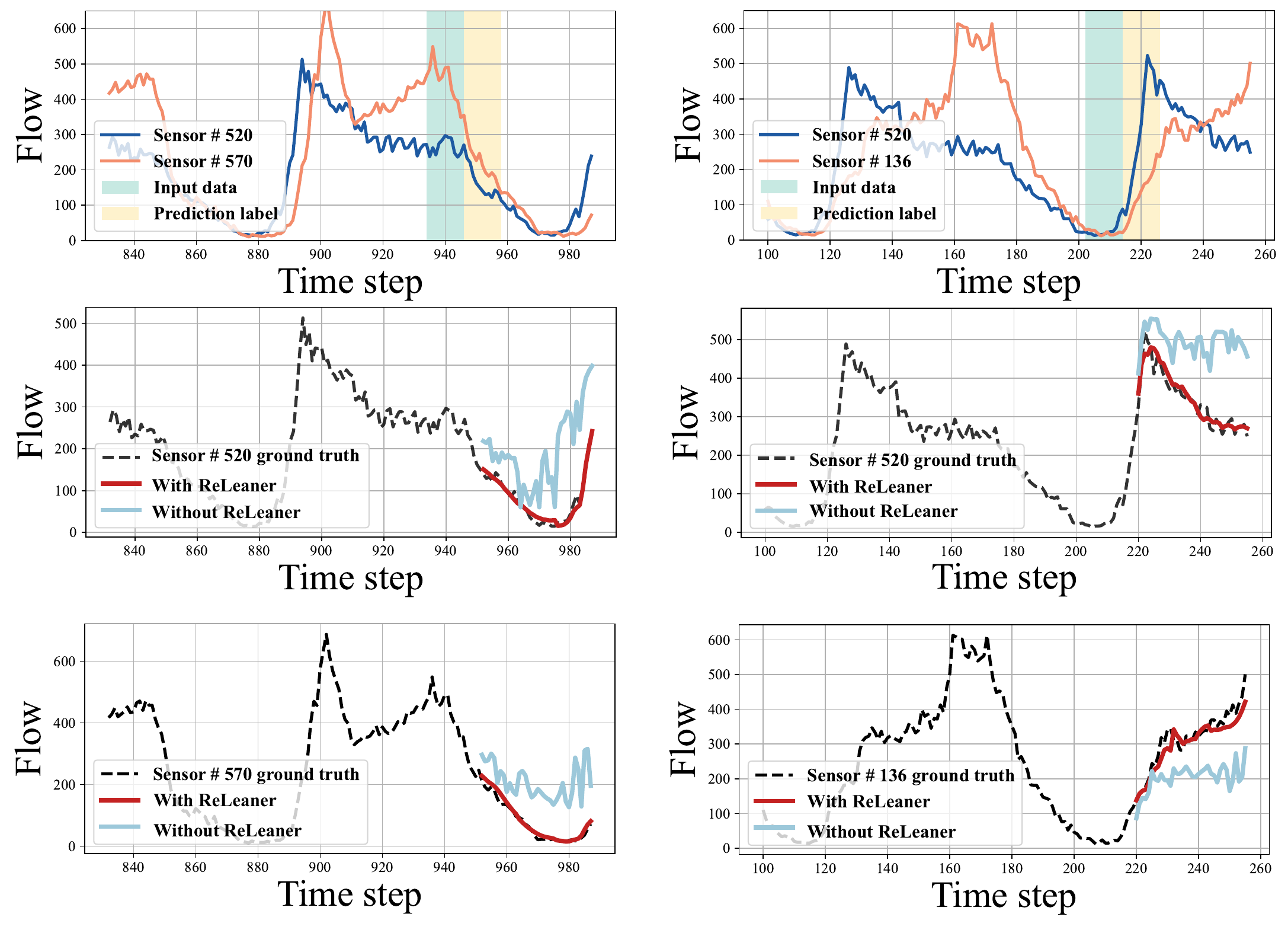}
\caption{Visualization cases of input-label deviation.}
\label{fig_ikeshihua}
\end{figure}

We evaluate the effectiveness of the model on samples with deviation in spatial and temporal dimensions. We include additional time-inconsistent samples by selecting instances where the Surge/Plummet ratio of the mean of input data relative to the mean of label data ranged from 25\% to 75\%. As shown in Table ~\ref{spatial_in} of the experimental results, we found that our proposed modules effectively enhance the modeling capability of STNN for spatiotemporal deviation features. For STID and D$^2$STGNN, the proposed modules explicitly utilize label features to mitigate the negative impact of these deviation.

\begin{table}[!htbp]
  \centering
  \caption{Modeling preformance of temporal input-label deviation with different change ratio. We use LargeST-SD dataset as example and train ReLearner with STNNs in a join-training manner.}
    \resizebox{\linewidth}{!}{ \begin{tabular}{ccccc|ccc|ccc}
    \toprule
    \multicolumn{2}{c}{\multirow{2}[0]{*}{\textbf{Surge}}} & \multicolumn{3}{c}{\textbf{25\% - 50\%}} & \multicolumn{3}{c}{\textbf{50\% - 75\%}} & \multicolumn{3}{c}{\textbf{75\% - 100\%}}  \\
\cmidrule{3-11}    \multicolumn{2}{c}{} & MAE   & RMSE  & MAPE  & MAE   & RMSE  & MAPE  & MAE   & RMSE  & MAPE \\
    \midrule
    \multirow{3}[2]{*}{STGCN} & -     & 18.12 & 46.81 & 12.10 & 23.80 & 36.83 & 13.65 & 27.67 & 40.42 & 45.20 \\
          & + ReLearner   & \textbf{15.07} & \textbf{29.82} & \textbf{10.32} & \textbf{18.13} & \textbf{27.55} & \textbf{13.06} & \textbf{24.34} & \textbf{37.60} & \textbf{42.27} \\
          & Imp.  & +16.83\% & +36.29\% & +14.71\% & +23.82 & +25.19\% & +4.32\% &   +12.03\%    &  +6.98\%     & +6.48\% \\
    \midrule
    \multirow{3}[2]{*}{STID} & -     & 14.82 & 29.17 & 11.78 & 19.65 & 30.24 & 13.47 & 27.06 & 41.40 & 43.63 \\
          & + ReLearner   & \textbf{12.04} & \textbf{26.12} & \textbf{9.16}  & \textbf{15.15} & \textbf{24.02} & \textbf{10.56} & \textbf{20.55} & \textbf{32.16} & \textbf{33.50} \\
          & Imp.  &  +18.76\%&+10.46\%&+22.24\%&+22.90\%&+20.57\%&+21.60\%&+24.06\%&+22.32\%&+23.21\% \\
    \midrule
    \multirow{3}[2]{*}{STAEformer} & -     & 12.79 & 25.57 & 10.21 & 15.26 & 28.61 & 11.64 & 25.63 & 36.06 & 35.26 \\
          & + ReLearner   & \textbf{11.74} & \textbf{25.38} & \textbf{9.86}  & \textbf{15.59} & \textbf{24.21} & \textbf{10.78} & \textbf{21.79} & \textbf{34.19}& \textbf{33.91} \\
          & Imp.  &  +8.21\%&+0.74\%&+3.43\%&2.16\%&+15.38\%&+7.39\%&+14.98\%&+5.19\%&+3.83\% \\
    \midrule
    \multirow{3}[2]{*}{D$^2$STGNN} & -     & 11.79 & 25.07 & 9.92  & 14.89 & 23.35 & 10.14 & 21.09 & 33.37 & 34.64 \\
          & + ReLearner   & \textbf{11.38} & \textbf{24.40} & \textbf{9.04}  & \textbf{14.25} & \textbf{22.59} & \textbf{9.98}  & \textbf{20.31} & \textbf{31.37} & \textbf{32.85} \\
          & Imp.  &    +3.48\%&+2.67\%&+8.87\%&+4.30\%&+3.25\%&+1.58\%&+3.70\%&+5.99\%&+5.17\% \\
    \midrule
    \midrule
    \multicolumn{2}{c}{\multirow{2}[0]{*}{\textbf{Plummet}}} & \multicolumn{3}{c}{\textbf{25\% - 50\%}} & \multicolumn{3}{c}{\textbf{50\% - 75\%}} & \multicolumn{3}{c}{\textbf{75\% - 100\%}} \\
\cmidrule{3-11}    \multicolumn{2}{c}{} & MAE   & RMSE  & MAPE  & MAE   & RMSE  & MAPE  & MAE   & RMSE  & MAPE \\
    \midrule
    \multirow{3}[2]{*}{STGCN} & -     & 24.07 & 56.63 & 18.31 & 28.10 & 48.14 & 20.65 & 33.25 & 47.47 & 22.61 \\
          & + ReLearner   & \textbf{19.59} & \textbf{35.74} & \textbf{13.06} & \textbf{21.03} & \textbf{36.49} & \textbf{16.81} & \textbf{22.95} & \textbf{39.08} & \textbf{19.73} \\
          & Imp.  &   +18.61\%&+36.89\%&+28.67\%&+25.16\%&+24.20\%&+18.60\%&+30.98\%&+17.67\%&+12.74\% \\
    \midrule
    \multirow{3}[2]{*}{STID} & -     & 26.70 & 53.64 & 15.09 & 27.15 & 51.77 & 20.77 & 27.54 & 47.78 & 26.23 \\
          & + ReLearner   & \textbf{14.44} & \textbf{26.78} & \textbf{9.50}  & \textbf{15.38} & \textbf{27.26} & \textbf{11.92} & \textbf{17.18} & \textbf{29.59} & \textbf{14.24} \\
          & Imp.  &   +45.92\%&+50.07\%&+37.04\%&+43.35\%&+47.34\%&+42.61\%&+37.62\%&+38.07\%&+45.71\% \\
    \midrule
    \multirow{3}[2]{*}{STAEformer} & -     & 18.12 & 37.22 & 13.25 & 19.94 & 37.92 & 26.00 & 23.40 & 39.45 & 18.62 \\
          & + ReLearner   & \textbf{17.30} & \textbf{36.17} & \textbf{10.79} & \textbf{18.48} & \textbf{36.27} & \textbf{15.59} & \textbf{20.04} & \textbf{36.41}& \textbf{16.64} \\
          & Imp.  &   +4.53\%&+2.82\%&+18.57\%&+7.32\%&+4.35\%&+40.04\%&+14.36\%&+7.71\%&+10.63\%\\
    \midrule
    \multirow{3}[2]{*}{D$^2$STGNN} & -     & 14.05 & 27.23 & 10.41 & 15.30 & 28.23 & 13.16 & 17.46 & 30.28 & 15.35 \\
          & + ReLearner   & \textbf{13.54} & \textbf{26.58} & \textbf{9.61}  & \textbf{15.11} & \textbf{27.44} & \textbf{11.89} & \textbf{17.09} & \textbf{29.78} & \textbf{14.06} \\
          & Imp.  &   +3.63\%&+2.39\%&+7.68\%&+1.24\%&+2.80\%&+9.65\%&+2.12\%&+1.65\%&+8.40\% \\
    \bottomrule
    \end{tabular}}
  \label{spatial_in}%
\end{table}%

\subsection{Impact on Training Dynamics and Convergence (RQ3)}
In our evaluation on the LargeST-SD dataset, we present the efficiency costs associated with integrating ReLearner with various advanced models. One key advantage of ReLearner is observed in accelerating the convergence speed of the models. Through visualizations, we showcase the training processes of several STNNs with ReLearner. By visualizing the model training procedures using the SD and Knowair-1 datasets with multiple spatiotemporal prediction as examples, as depicted in Fig.\ref{figC}, we observe that the integration of ReLearner leads to faster convergence, especially when employing the joint training strategy. This acceleration in convergence speed can be attributed to the improved model fitting to the data distribution by correcting prediction, thereby alleviating the learning burden caused by redundant features.

\begin{figure}[!h] 
\centering
\includegraphics[width=0.48\textwidth]{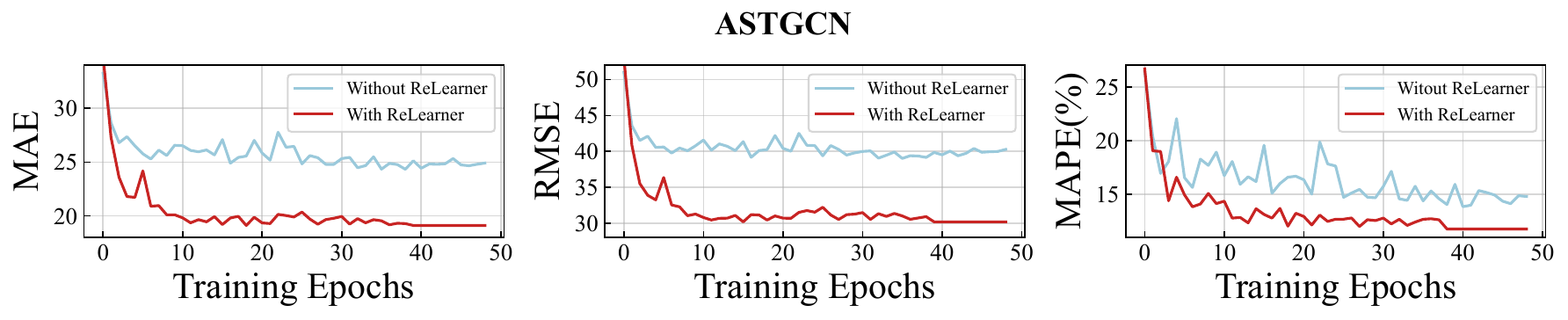}
\includegraphics[width=0.48\textwidth]{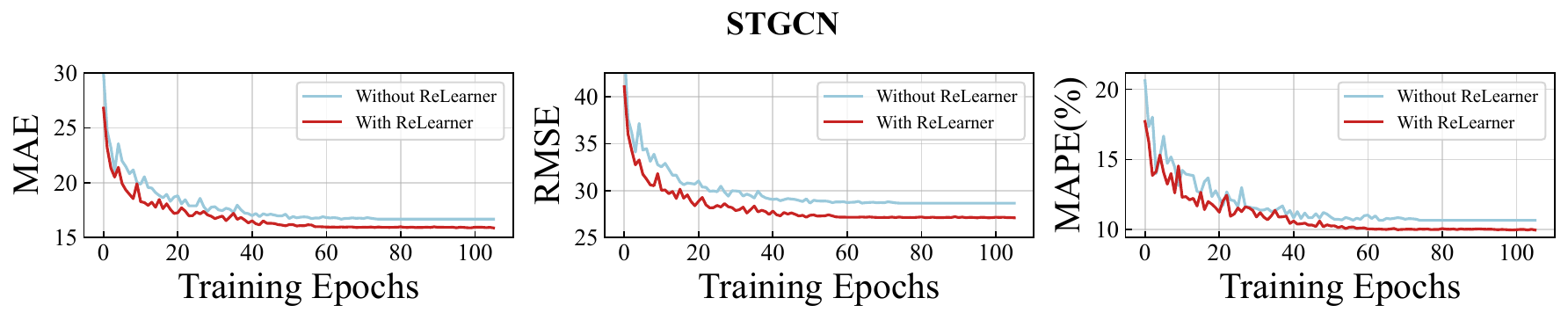}
\includegraphics[width=0.48\textwidth]{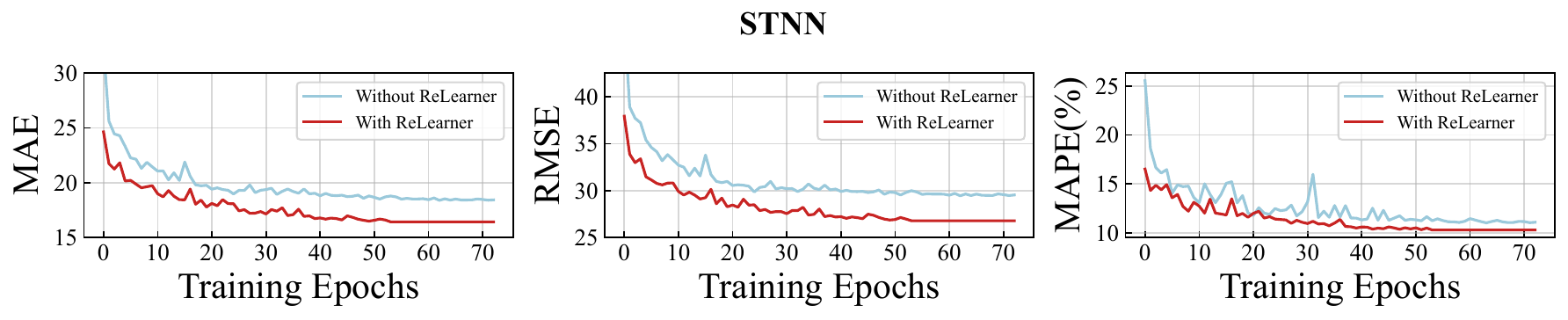}
\caption{A comparison of the convergence speed and convergence results in validation phase of baselines without ReLearner and with ReLearner on the LargeST-SD dataset.}
\label{figC}
\end{figure}

\subsection{Computational and Architecture Efficiency (RQ4)} \label{notequalin}
\noindent\ding{182} \textbf{Computational Efficiency Analysis}. Furthermore, we delve into the computational complexity of ReLearner by examining several advanced space-time prediction models, as illustrated in Table \ref{eff12}. Our analysis reveals that ReLearner offers substantial performance gains at a relatively modest model parameter cost. To optimize training efficiency, we utilize the fine-tuning training method by directly fine-tuning the pre-trained STNN with ReLearner, thereby minimizing time overhead. In conclusion, given the significant performance enhancements, the complexity burden introduced by ReLearner is considered acceptable. It is important to note that our primary focus lies on the performance rather than efficiency, which differs from the current emphasis in the spatiotemporal community.

\begin{table}[!htbp]
  \centering
  \caption{Model efficiency analysis on the LargeST-SD dataset. We report the improvement in average prediction performance over MAPE.}
    \resizebox{\linewidth}{!}{\begin{tabular}{cccccc}
    \toprule
 
    \multicolumn{2}{c}{\textbf{Method}} & \multirow{2}[0]{*}{\textbf{Parameter}} & \multirow{2}[0]{*}{\textbf{Time/epoch}(s)} & \multirow{2}[0]{*}{\textbf{Total time} (h)} & \multirow{2}[0]{*}{\textbf{Improvements}} \\
\cmidrule{1-2}    Baseline & ReLearner &       &       &       &  \\
    \midrule
    \multirow{2}[0]{*}{STGCN} & -     & 508K  & 90    & 2.0   & - \\
          & + ReLearner  & 624K  & 148   & 3.5   & +8.79\% \\
    \midrule
    \multirow{2}[0]{*}{STID} & -     & 128K  & 7     & 0.3   & - \\
          & + ReLearner  & 244K  & 12    & 0.3   & +6.06\% \\
    \midrule
    \multirow{2}[0]{*}{ASTGCN} & -     & 2.2M  & 466   & 7.3   & - \\
          & + ReLearner  & 2.6M  & 900   & 10.2  & 14.13\% \\
    \midrule
    \multirow{2}[0]{*}{AGCRN} & -     & 761K  & 365   & 7.6   & - \\
          & + ReLearner  & 1.0M  & 609   & 11.9  & 9.57\% \\
    \bottomrule
    \end{tabular}}
  \label{eff12}%
\end{table}%

\noindent\ding{183} \textbf{Architecture Efficiency Analysis}. We aim to determine whether the performance improvement achieved by ReLearner originates from its reasonable architecture innovation—the reverse spatiotemporal residual learning mechanism—rather than from a mere increase in model parameters. We use STID and D$^2$STGNN datasets as the example to demonstrate that the performance improvement brought by ReLearner does not stem from increasing parameter size or computational efforts, which can not achieve the same effect of our ReLearner, we have create a variant which stacks two STNNs (STID and D$^2$STGNN) to increase the parameter size, and we increase computational effort by training STNNs using double maximum epochs and the patience of the early stop strategy, and this variant is defined as STNN-Plus. The experimental results are shown in Table~\ref{efffff}, where we observe that simply increasing the parameter size does not lead to performance enhancement. For the complex model D$^2$STGNN, simply stacking models may actually decrease model performance, as an excessively large parameter size can lead to overfitting of the model to the data. Our performance improvement originates from the effective modeling of inconsistent features.
\begin{table}[htbp]
  \centering
  \caption{Average performance comparison of STNN, STNN-Plus, and STNN+ ReLearner on LargeST-SD dataset.}
    \begin{tabular}{cccc}
    \toprule
    \textbf{Model} & \textbf{MAE}   & \textbf{RMSE}  & \textbf{MAPE} \\
    \midrule
    STID  & 18.00 & 30.75 & 12.05 \\
    STID-Plus & 17.94 & 30.43 & 12.13 \\
    STID+ ReLearner & 17.08    & 28.92 & 11.32 \\
    \midrule
    D$^2$STGNN & 17.44 & 29.58 & 12.18 \\
    D$^2$STGNN-Plus & 17.68 & 31.04 & 12.57 \\
    D$^2$STGNN+ ReLearner & 17.19 & 28.53 & 11.20 \\
    \bottomrule
    \end{tabular}%
  \label{efffff}%
\end{table}%

\subsection{Sensitivity to Hyperparameter Configurations (RQ5)}
\noindent\ding{182} \textbf{Kernel function.} We evaluate the effect of different graph kernel types on model performance, which is explained in Equation \ref{L-equa}. The definitions of these kernels are described in Section~\ref{ap_kernel}. We take STGCN and STID as examples, and the results are shown in Table~\ref{hy_kernel222}. We find that adaptive kernel function for residual propagation achieves more accurate performance for both models. The underlying reason is that it can capture a more comprehensive spatiotemporal information. 

\begin{table}[htbp]
  \centering
  \caption{Kernel function sensitivity analysis.}
  \resizebox{\linewidth}{!}{
    \begin{tabular}{lccccccc}
    \toprule
    \multicolumn{1}{c}{\multirow{2}[0]{*}{\textbf{Kernel}}} & \multicolumn{3}{c}{\textbf{STID}} &       & \multicolumn{3}{c}{\textbf{STGCN}} \\
\cmidrule{2-4}\cmidrule{6-8}          & MAE   & RMSE  & MAPE  &       & MAE   & RMSE  & MAPE \\
    \midrule
    Transition & 17.63 & 30.40 & 11.76 &       & 19.04 & 33.53 & 13.42 \\
    DoubleTransition & \underline{17.40} & \underline{29.56} & \underline{11.44} &       & \underline{18.85} & 32.99 & \underline{12.90} \\
    Adaptive & \textbf{17.12} & \textbf{28.59} & \textbf{11.19} &       & \textbf{18.56} & \textbf{32.34} & \textbf{12.68} \\
    Data-driven & 17.99 & 30.14 & 12.84 &       & 19.30 & \underline{32.94} & 13.17 \\
    \bottomrule
    \end{tabular}}%
  \label{hy_kernel222}%
\end{table}%

\noindent\ding{183} \textbf{The number of residual propagation layers $L$.} We evaluate the sensitivity of $L$ in Equation~\ref{L-equa22}. Taking STID and STGCN as examples, experiment results on the LargeST-SD dataset are reported in Fig.\ref{L-equa}. We can see that optimal values of these two models are 2 and 3, respectively. When $L$ is smaller than the optimal value, shallow residual propagation may not effectively propagate sufficient spatiotemporal information of labels. If $L$ is equal to 0, it means that we do not utilize label information, and the large prediction errors also prove the validity of ReLearner. On the other hand, when $L$ exceeds the optimal value, excessive smoothing of information may occur due to deep layers.  Particularly, when $L$ is excessively large, ReLearner may have a negative impact, potentially due to overfitting caused by increased model complexity.

\begin{figure}[h]
    \centering
    \includegraphics[width=1\linewidth]{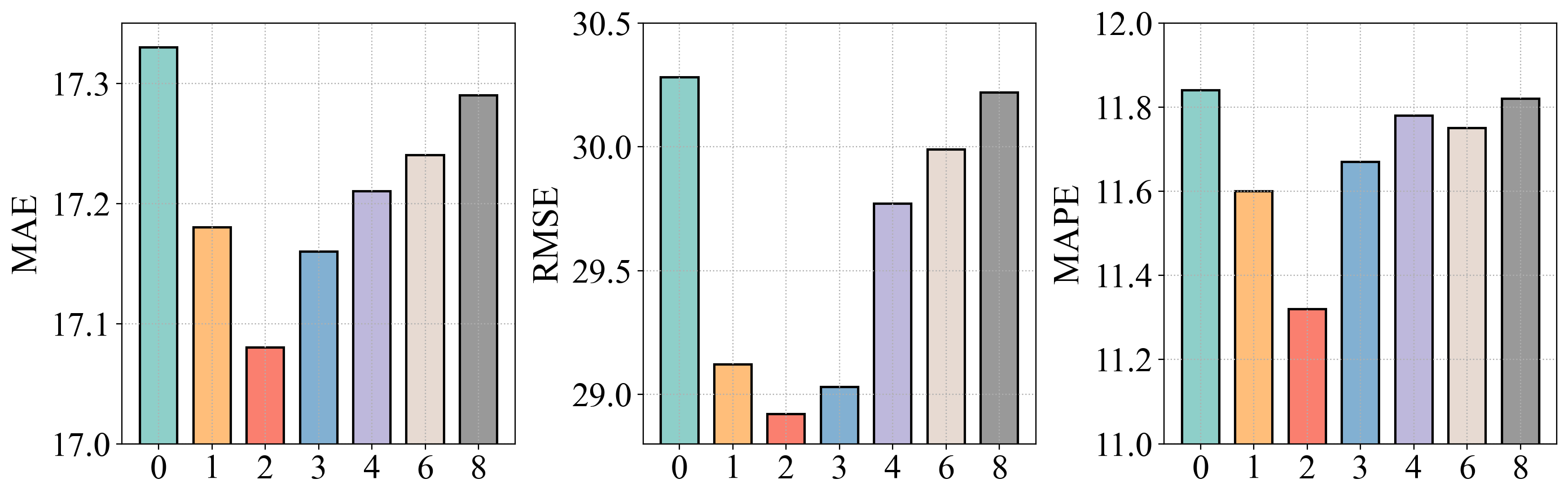}\\
    \includegraphics[width=1\linewidth]{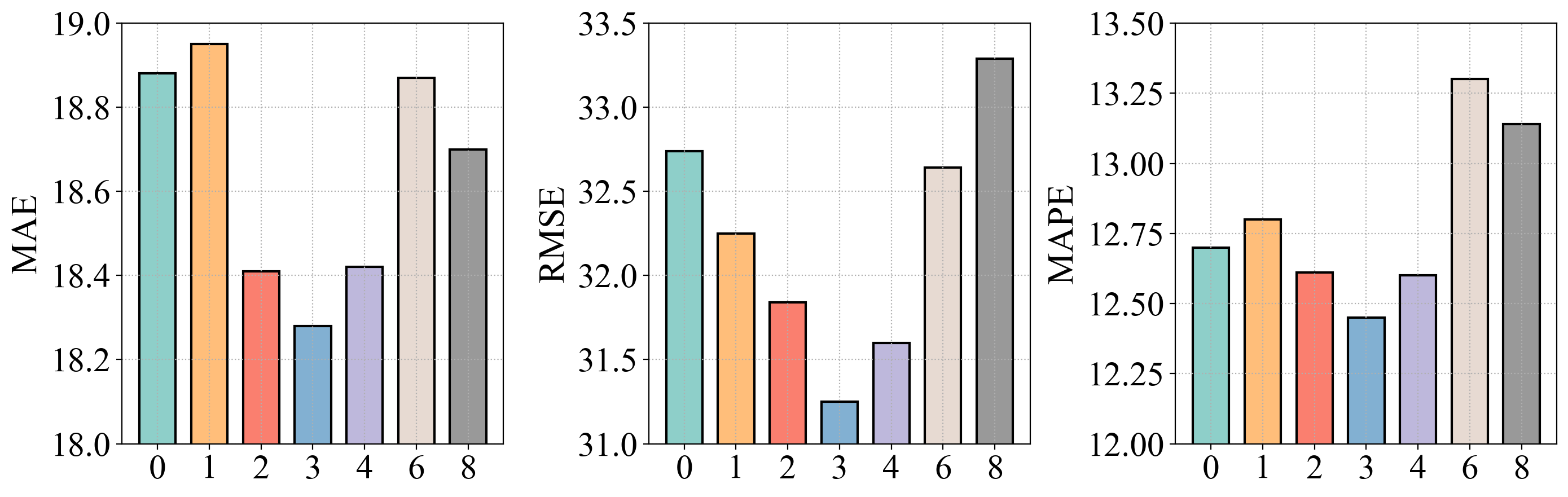}
    \caption{Hyperparameter experiment of $L$ with STID (Upper) and STGCN (lower).}
    \label{L-equa}
\end{figure}

\subsection{Comparison with Temporal Shift Technologies (RQ6)}\label{sec:temporalshift}
We compare ReLearner with current temporal shift technologies, including RevIN \cite{kim2021reversible} and Dish-TS \cite{fan2023dish}, by integrating them into STGCN and STID as the backbone models. The experimental results on the LargeST-SD dataset are presented in Table~\ref{tab:temporalshift}. We observe that each temporal shift technique performs poorly, even harming the original model's performance and capabilities. The reason lies in the current temporal shift methods' emphasis on extracting invariant statistical information from long-term time series to enhance learning, while neglecting the unique challenges posed by spatial diversity and dynamics in spatiotemporal learning. This neglect is even more pronounced when confronting the more complex challenge of input-label deviation inherent in spatiotemporal learning. In contrast, ReLearner, bolstered by spatiotemporal residual theory, effectively enhances the backbone model's performance on real-world datasets. This demonstrates that the unique challenges in spatiotemporal learning highlighted by our research motivation also represent an open research gap in the time series domain, underscoring the significance of the problems addressed by the ReLearner framework in spatiotemporal learning.

\begin{table}[!h]
  \centering
  \caption{The performance comparison between ReLearner and current temporal shift models on LargeST-SD dataset.}
  \resizebox{0.9\linewidth}{!}{  
  \begin{tabular}{clccc}
    \toprule
    \multicolumn{2}{c}{\textbf{Methods}} & \textbf{MAE} & \textbf{RMSE} & \textbf{MAPE} \\
    \midrule
    \multirow{4}[2]{*}{STID} & \multicolumn{1}{c}{-} & 18.00 & 30.75 & 12.05 \\
          & + RevIN & 25.72 & 64.92 & 21.44 \\
          & + Dish-TS & 27.75 & 54.40 & 19.85 \\
          & + \textbf{ReLearner} & \textbf{17.08} & \textbf{28.92} & \textbf{11.32} \\
    \midrule
    \multirow{4}[2]{*}{STGCN} & \multicolumn{1}{c}{-} & 19.53 & 33.79 & 13.65 \\
          & + RevIN & 21.56 & 39.18 & 16.49 \\
          & + Dish-TS & 27.59 & 45.50 & 18.71 \\
          & + \textbf{ReLearner} & \textbf{18.41} & \textbf{31.84} & \textbf{12.45} \\
    \bottomrule
    \end{tabular}
    }%
  \label{tab:temporalshift}%
\end{table}%

\subsection{Comparison with STID (RQ7)}

Moreover, we have further detailed comparisons between STID and ReLearner. \textbf{First and foremost, we emphasize that our motivation is not to surpass specific techniques within the model. Our contribution lies in introducing a general module to enhance existing spatiotemporal prediction models, which is orthogonal to existing technologies}. Just as the analysis below illustrates: the combination of STID and ReLearner outperforms other variants in terms of inconsistent feature performance, our model can synergize with other advanced technologies to generate a broader and more comprehensive impact.

\noindent\ding{182} \textbf{Performance comparison}. STID identifies spatiotemporal deviations and rectifies spatial deviation using node embedding techniques. We integrated the embedding technique from STID with ReLearner. A comparison of spatial and temporal deviation samples is presented in Table \ref{STID_TABLE3}.

Our analysis demonstrates that the proposed module excels in capturing spatiotemporal deviation features compared to node embedding techniques, particularly in addressing temporal deviation challenges. This enhanced performance is attributed to our method's explicit utilization of label information, resulting in more precise modeling. Additionally, it highlights that the node embedding technique introduced by STID may not effectively resolve spatiotemporal deviation issues. Conversely, STID + ReLearner achieves competitive performance, showcasing the potential of integrating advanced technologies with ReLearner.
\begin{table}[htbp]
  \centering
  \caption{Modeling preformance of temporal input-label deviation with different change ratio, and we use LargeST-SD dataset as example.}
    \resizebox{\linewidth}{!}{ \begin{tabular}{ccccc|ccc|ccc}
    \toprule
    \multicolumn{2}{c}{\multirow{2}[0]{*}{\textbf{Model}}} & \multicolumn{3}{c}{\textbf{25\% - 50\%}} & \multicolumn{3}{c}{\textbf{50\% - 75\%}} & \multicolumn{3}{c}{\textbf{75\% - 100\%}}  \\
\cmidrule{3-11}    \multicolumn{2}{c}{} & MAE   & RMSE  & MAPE  & MAE   & RMSE  & MAPE  & MAE   & RMSE  & MAPE \\
    \midrule
    \multirow{3}[2]{*}{STID} & -     & 14.82 & 29.17 & 11.78 & 19.65 & 30.24 & 13.47 & 27.06 & 41.40 & 43.63 \\
          & w/o em+ ReLearner   & 14.03 & 28.79 & 11.53  & 19.31 & 29.72 & 12.46 & 25.68 & 35.97 & 38.42 \\
          & STID+ ReLearner  &  12.04 & 26.12 & 9.16  & 15.15 & 24.02 & 10.56 & 20.55 & 32.16 & 33.50 \\
    \bottomrule
    \end{tabular}}
  \label{STID_TABLE3}%
\end{table}%

\begin{figure}[!ht]
\centering
\includegraphics[width=0.48\textwidth]{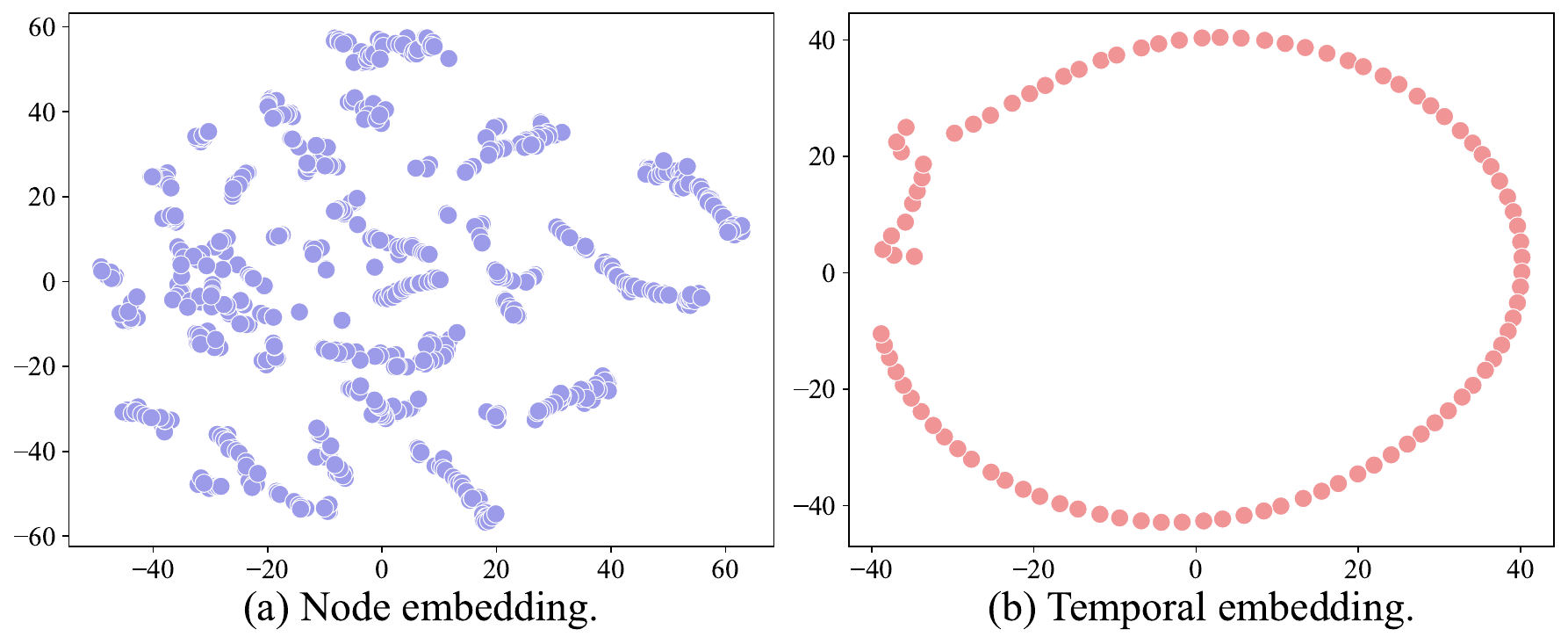}
\caption{Node embedding and temporal embedding of STID in LargeST-SD dataset.}
\label{fig_stid}
\end{figure} 

\noindent\ding{183} \textbf{Root Cause Analysis}. We further visualize the node embedding and temporal embedding of STID, as shown in Fig.\ref{fig_stid}. Regarding node embeddings, STID focuses on capturing shared patterns among nodes where nodes with similar traffic distributions cluster together. Hence, the node embeddings exhibit cluster distribution \cite{shao2022spatial}. Clearly, these shared patterns among nodes have limited utility in distinguishing spatial deviation features among nodes. Concerning time embeddings, STID captures periodic features which exhibit repetitive cycles \cite{shao2022spatial}, while temporal deviation features where traffic suddenly increases or decreases are rare. Therefore, this embedding evidently fails to capture temporal deviation.

\section{Discussion}\label{dis}
Although ReLearner demonstrates strong empirical performance and architectural flexibility, several aspects warrant further investigation. First, this study focuses primarily on four representative kernel functions for residual information propagation. Alternative kernel formulations, including those based on inter-node distributional similarity or learned kernel mixtures \cite{li2021spatial,lan2022dstagnn}, may further enhance the model’s ability to represent diverse residual dynamics. Second, while ReLearner improves predictive accuracy without significant computational overhead, future research could extend its applicability to broader spatiotemporal domains—such as climate modeling, social dynamics, and medical time series—where multi-scale dependencies and long-term temporal uncertainties are prevalent. Exploring these directions would further advance the generality, interpretability, and robustness of residual-based spatiotemporal learning.

\section{Conclusion}\label{con}
In this paper, we introduce the spatiotemporal residual theory with corresponding method \textbf{ReLearner}, a versatile and model-agnostic module designed to enhance the predictive capability of STNNs. ReLearner integrates a reverse learning procedure into the learning process through a residual-based correction mechanism, thereby enabling the model to explicitly capture and correct spatiotemporal input–label deviation. This design allows ReLearner to be seamlessly integrated into diverse STNN architectures without additional structural modifications. Extensive experiments across multiple benchmark datasets and representative STNN backbones demonstrate that ReLearner consistently improves forecasting accuracy—achieving performance gains of up to 17.36\%—while maintaining computational efficiency and scalability.

\section*{Acknowledgment}
This paper is partially supported by the National Natural Science Foundation of China (No.12227901) and the Natural Science Foundation of Jiangsu Province (BK20250482). The AI-driven experiments, simulations and model training were performed on the robotic AI-Scientist platform of Chinese Academy of Science.

\bibliographystyle{IEEEtran}
\bibliography{nips}

\begin{IEEEbiography}[{
    \includegraphics[width=0.9in,height=1.2in]{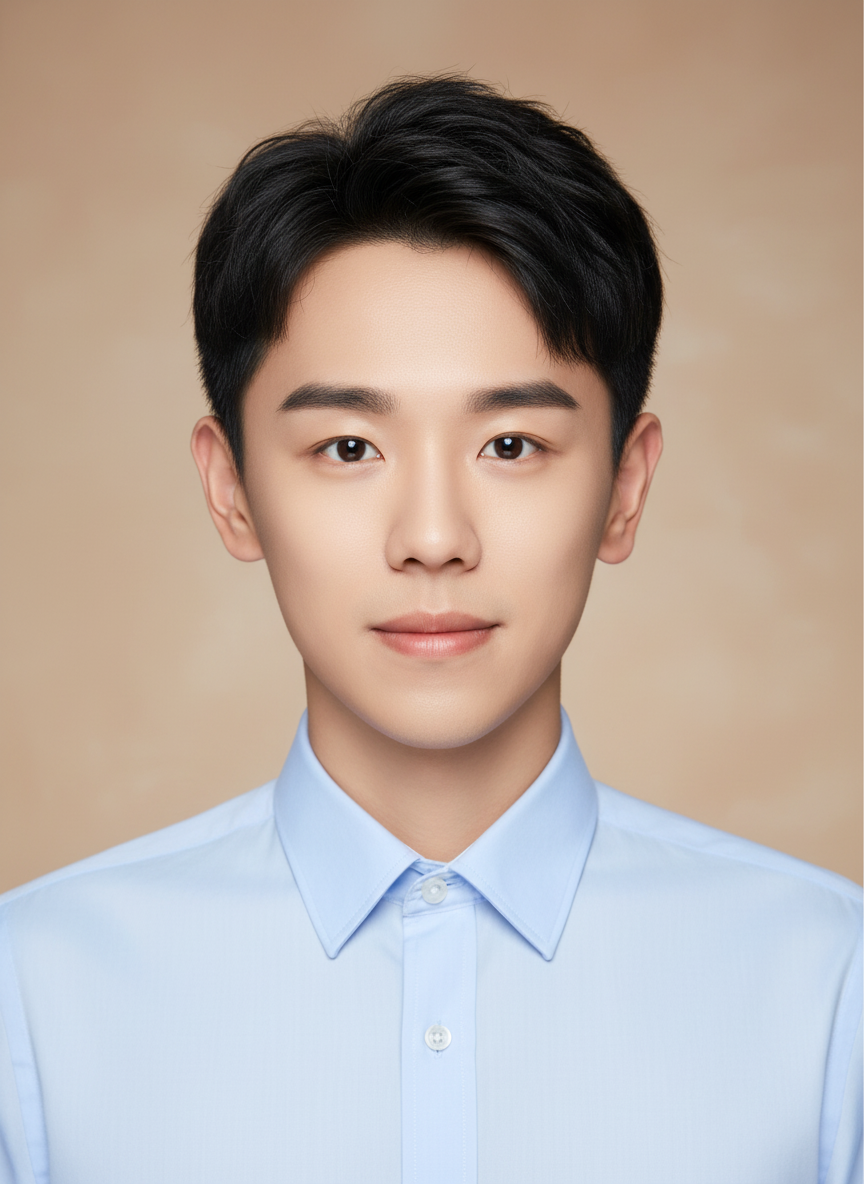}}]{Jiaming Ma} is now a Ph.D. student in the School of Artificial Intelligence and Data Science, University of Science and Technology of China (USTC). His current research focuses on deep learning structure of time series task. He has published papers in top conferences and journals such as NeruIPS, ICML, VLDB, KDD, IJCAI, and IEEE TMC.
    
    \end{IEEEbiography}
    \vspace{-0.6in}		
    
    \begin{IEEEbiography}[{ \centering \includegraphics[width=0.8in,height=1in]{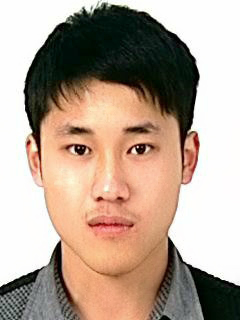}}]{Binwu Wang} is now an Associate Researcher at the University of Science and Technology of China (USTC). He got his Ph.D. degree at USTC in 2024. He has published over twenty research papers in top journals and conferences such as IEEE TMC, IEEE TITS, IEEE TVT, ICLR, SIGKDD, and AAAI. His research interests mainly include data mining, machine learning, and continuous learning. 
    \end{IEEEbiography}
    
    \vspace{-0.6in}
    
    \begin{IEEEbiography}[{\includegraphics[width=0.8in,height=1in]{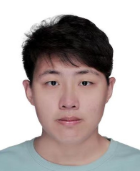}}]{Pengkun Wang} (Member, IEEE) is now an Associate Researcher at the University of Science and Technology of China (USTC). He got his Ph.D. degree at USTC in 2023, under the supervision of Professor Qi Liu and Yang Wang. His research interest mainly includes open environment machine learning, spatiotemporal data mining, and generalized AI for Science.
    \end{IEEEbiography}

    \vspace{-0.6in}
    \begin{IEEEbiography}[{\includegraphics[width=0.8in,height=1in]{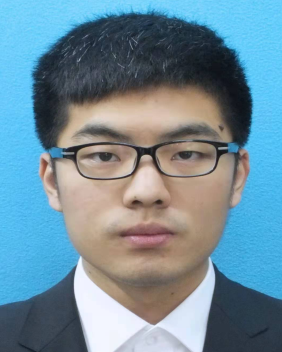}}]{Xu Wang} is now an Associate Researcher at the University of Science and Technology of China (USTC). He received his Ph.D. degree at USTC in 2023, under the supervision of Professor Zheng-Jun Zha and Yang Wang. He got his bachelor’s degree in automation at North Eastern University in 2017. His research interests mainly encompass spatiotemporal data mining, time series analysis, and the application of AI in scientific research.\end{IEEEbiography}

    \vspace{-0.6in}
    \begin{IEEEbiography}[{\includegraphics[width=0.8in,height=1in]{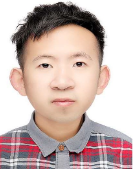}}]{Zhengyang Zhou} (Member, IEEE) is now an Associate Researcher at Suzhou Institute for Advanced Research, University of Science and Technology of China (USTC). He got his Ph.D. degree at USTC in 2023. He has published over twenty papers in top conferences and journals such as IEEE TMC, IEEE TKDE, KDD, ICLR, WWW, AAAI, NeurIPS, and ICDE. His main research interests include human-centered urban computing and mobile data mining.
    \end{IEEEbiography}
    
    \vspace{-0.6in}		
    \begin{IEEEbiography}[{\includegraphics[width=0.8in,height=1in]{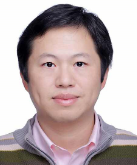}}]{Yang Wang} (Senior Member, IEEE) is now an Associate Professor at the School of Computer Science and Technology, School of Software Engineering, and School of Artificial Intelligence and Data Science at the University of Science and Technology of China (USTC). He got his Ph.D. degree at USTC in 2007. Since then, he keeps working at USTC till now as a postdoc and an associate professor successively. Meanwhile, he also serves as the vice dean of the School of Software Engineering of USTC. His research interests mainly include mobile (sensor) networks, distributed systems, data mining, and machine learning, and he is also interested in all kinds of applications of AI and data mining technologies, especially in urban computing and AI4Science. His work has been published in top-tier conferences and journals like ICLR, NeurIPS, ICML, KDD, AAAI, WWW, IEEE TKDE, IEEE TMC, and IEEE TPAMI, with over fifty papers as the first author or corresponding author. 
    \end{IEEEbiography}

\end{document}